\documentclass[letterpaper, 10pt, conference]{ieeeconf}
\IEEEoverridecommandlockouts
\usepackage{times}
\newcommand{\citep}{\cite}
\newcommand{\citet}{\cite}
\usepackage{multicol}
\usepackage{xcolor}
\usepackage{colortbl}
\usepackage[bookmarks=true]{hyperref}
\hypersetup{letterpaper,bookmarksopen,bookmarksnumbered,
breaklinks=true,
pdfpagemode=UseOutlines,
colorlinks=true,
linkcolor=blue,
anchorcolor=blue,
citecolor=red,
filecolor=blue,
menucolor=blue,
urlcolor=blue
}
\usepackage{amssymb,amsmath}
\usepackage{gensymb}
\usepackage{cite} 
\usepackage{balance}
\usepackage{booktabs}
\usepackage{caption}
\usepackage[font=footnotesize,labelformat=simple]{subcaption}
\usepackage[font=footnotesize]{caption}
\usepackage{graphicx}
\usepackage{tabularx}
\usepackage{pifont}

\usepackage{dsfont}
\usepackage{svg}

\usepackage{algorithm}
\usepackage{algorithmicx}
\usepackage[noend]{algpseudocode}
\usepackage{amsmath}
\usepackage{svg}
\usepackage{xspace}
\usepackage[normalem]{ulem}
\usepackage{multirow}
\usepackage{tabularx}
\usepackage{subcaption}
\let\labelindent\relax
\usepackage{enumitem}
\usepackage{tikz}
\usepackage{xcolor}
\usetikzlibrary{fit}
\newcommand\algname[1]{\textsf{#1}\xspace}
\newcommand\astar{\algname{A*}}

\newcommand\hastar{\algname{HA*}}
\newcommand\ighastar{\algname{IGHA*}}
\newcommand\biighastar{\algname{Bi-IGHA*}}

\newcommand\sst{\algname{SST}}
\newcommand\idba{\algname{idB-A*}}

\newcommand\activate{\textsc{Activate}\xspace}
\newcommand\shift{\textsc{Shift}\xspace}
\newcommand\nearmeet{\textsc{NearMeet}\xspace}
\newcommand\gnmv{\textsc{GetNearMeetVertex}\xspace}
\newcommand\forward{\textsc{Forward}\xspace}
\newcommand\backward{\textsc{Backward}\xspace}

\newcommand{\xxnote}[3]{}
\ifx\hidenotes\undefined
  \renewcommand{\xxnote}[3]{\color{#2}{#1: #3}}
\fi
\algdef{SE}[DOWHILE]{Do}{DoWhile}{\algorithmicdo}[1]{\algorithmicwhile\ #1}%

\newtheorem{definition}{Definition}[section]

\newtheorem{theorem}{Theorem}[section]

\newtheorem{corollary}{Corollary}[section]

\newtheorem{observation}{Observation}[section]
\DeclareMathOperator{\Succ}{\textsc{Succ}}
\DeclareMathOperator*{\argmin}{arg\,min}

\DeclareRobustCommand{\legendsolid}[1]{%
  \textcolor[rgb]{#1}{\raisebox{0.3ex}{\rule{0.5cm}{2pt}}}%
}
\usepackage{mdframed}
\newmdenv[
  backgroundcolor=green!10, 
  linecolor=black,          
  linewidth=0.5pt,          
  roundcorner=2pt,          
  innertopmargin=4pt,
  innerbottommargin=4pt,
  innerleftmargin=6pt,
  innerrightmargin=6pt,
]{insightbox}

\begin{document}

\title{Bidirectional Incremental Generalized Hybrid A*}
\author{
    Sidharth Talia,
    Oren Salzman,
    Siddhartha Srinivasa
    \vspace{-10pt}
}
\maketitle
\begin{abstract}
We focus on the problem of efficient anytime kinodynamic planning for systems with complex dynamics in unstructured environments that make precomputing motion primitives infeasible.
Directly applying \astar to such problems is computationally infeasible due to the curse of dimensionality.
Methods such as Hybrid \astar addressed this burden by discretizing the state space, but in turn creating a coupling between tree discovery and the discretization resolution.
The Incremental Generalized Hybrid \astar (\ighastar) performs search over a hierarchy of resolutions in an anytime fashion to break this coupling, by \textit{freezing} vertices to use in later search iterations rather than \textit{pruning} them.
However, the frozen vertices can hide solution-supporting vertices from the search at a particular iteration.
While classical bidirectional search is motivated by reduction of search depth, extending \ighastar into the bidirectional setting (termed \biighastar) obtains additional benefit by fundamentally mitigating the behaviour induced by frozen vertices hiding solutions.
We show that \biighastar preserves \ighastar's guarantees on monotonic cost improvement and termination.
We empirically show that \biighastar substantially reduces expansions on $\mathbb{R}^3$, $\mathbb{R}^4$, and $\mathbb{R}^6$ planning problems, and achieves equivalent closed-loop performance with kinodynamic planning for high-speed off-road autonomy while requiring significantly fewer expansions.
Website: \url{https://personalrobotics.github.io/IGHAStar/biighastar.html}
\end{abstract}
\vspace{-10pt}
\section{Introduction}

Planning for autonomous systems in complex, unstructured environments—such as high-speed off-road autonomy~\citep{han2023model, talia2025incremental} and aerial robotics~\citep{uav_path_planning}—is challenging due to complex, nonlinear dynamics.
We are motivated by kinodynamic planning for high-speed off-road autonomous vehicles, where precomputing motion trees is often infeasible because the terrain, friction, and obstacle geometry continuously evolve~\citep{han2023model, meng2023}.
In many such settings, the planner has access only to a computationally expensive forward dynamics model rather than a fast analytical solution to the two-point boundary value problem (2pBVP).

For such problems, directly applying \astar\ search in continuous kinodynamic spaces is computationally prohibitive due to the curse of dimensionality.
Hybrid \astar\ (\hastar)~\citep{dolgov2010path} addressed this challenge by using \textit{approximate dominance}~\citep{bellman_curse}; discretizing the state space into grid cells,
with each cell only retaining the lowest cost vertex.
However, performance depends critically on discretization;
Too coarse, and you may prune a vertex leading to the goal with because of its higher cost, and end up expanding a misaligned vertex.
Too fine and you increase the overall computational burden.
Incremental Generalized Hybrid \astar\ (\ighastar)~\citep{talia2025incremental} mitigates this tradeoff by organizing search over a hierarchy of resolutions in an anytime fashion, and decoupling dominance and tree generation outside the per-resolution search process.
While \ighastar does not \textit{prune} locally suboptimal vertices at a resolution, it does \textit{freeze} them, preventing them from being expanded at that search iteration.

A concrete illustration of this effect is shown in Fig.~\ref{fig:main}, where
a vehicle must reach a goal configuration defined in $(x,y,\theta)$ using a Reeds–Shepp distance threshold.
Although the search appears to pass near the goal region, it freezes a crucial vertex, preventing it from reaching a vertex satisfying the goal condition, and instead expands a misaligned vertex instead leading to far more expansions.
While this example involves an orientation-constrained passage, the underlying phenomenon is more general: 
frozen vertices can hide solution-supporting vertices from 
the search; we term this the frozen vertex barrier.

In this work we show that bidirectionality fundamentally changes this behavior.
We propose \emph{Bidirectional Incremental Generalized Hybrid \astar} (Bi-\ighastar), 
consisting two \ighastar searches (\forward,~\backward) progressing in opposite directions, sharing information for branch-and-bound and for detecting near-meets that connect the two trees~(Alg.~\ref{alg:bi-igha}).
Classical bidirectional search is typically motivated by reduced effective search depth.
Our key insight is that \biighastar 
additionally benefits from a mitigation of the frozen vertex barrier through near-meets.
Bi-\ighastar\ preserves the core guarantees of \ighastar, including monotonic improvement of solution cost and termination.
Empirical results in $\mathbb{R}^3$, $\mathbb{R}^4$, and $\mathbb{R}^6$ planning problems demonstrate substantial reductions in vertex expansions.
In closed-loop kinodynamic car planning for high-speed off-road autonomy, Bi-\ighastar\ achieves equivalent solution quality to \ighastar\ while requiring significantly fewer expansions.

\begin{figure*}[t]
\centering

\begin{minipage}{0.66\linewidth}
\centering

\begin{subfigure}{0.23\linewidth}
  \includegraphics[width=\linewidth]{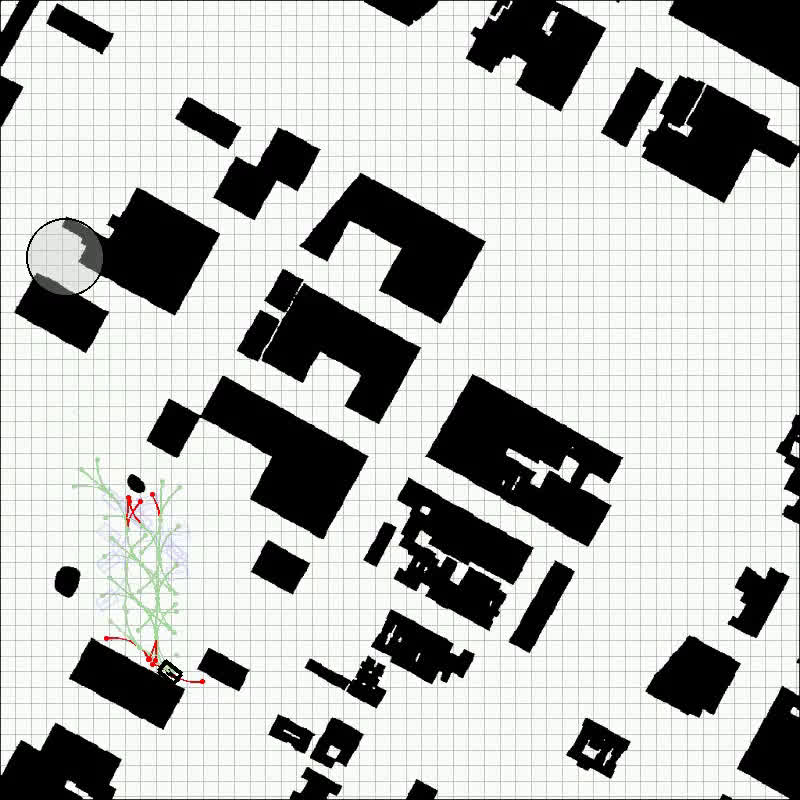}
  \caption{\ighastar starts}
  \label{fig:igha_anecdote_3DoF_0}
\end{subfigure}
\hspace{1pt}
\begin{subfigure}{0.23\linewidth}
  \includegraphics[width=\linewidth]{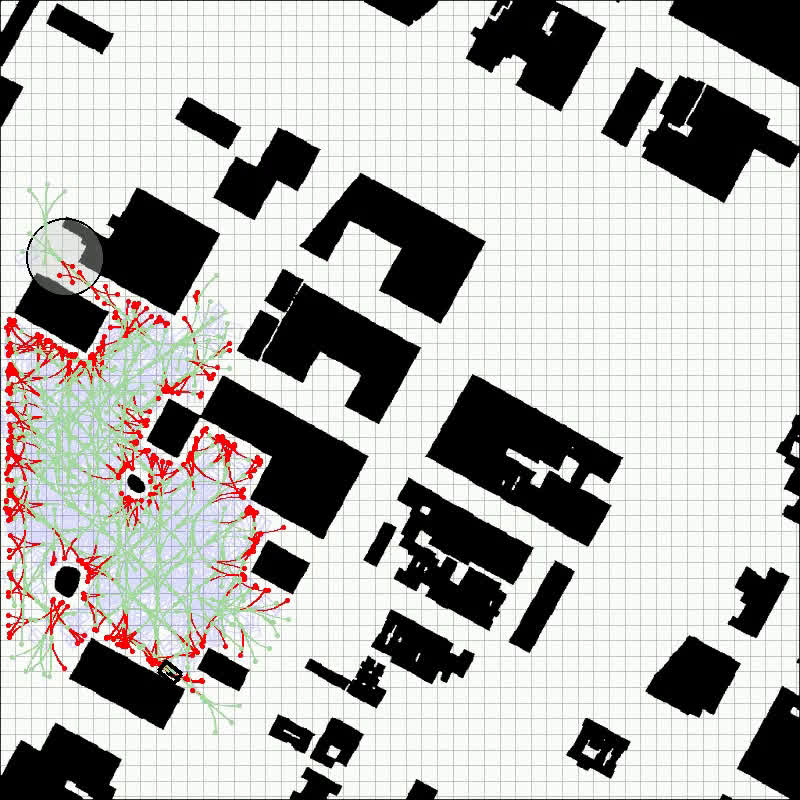}
  \caption{Missed Goal}
  \label{fig:igha_anecdote_3DoF_1}
\end{subfigure}
\hspace{1pt}
\begin{subfigure}{0.23\linewidth}
  \includegraphics[width=\linewidth]{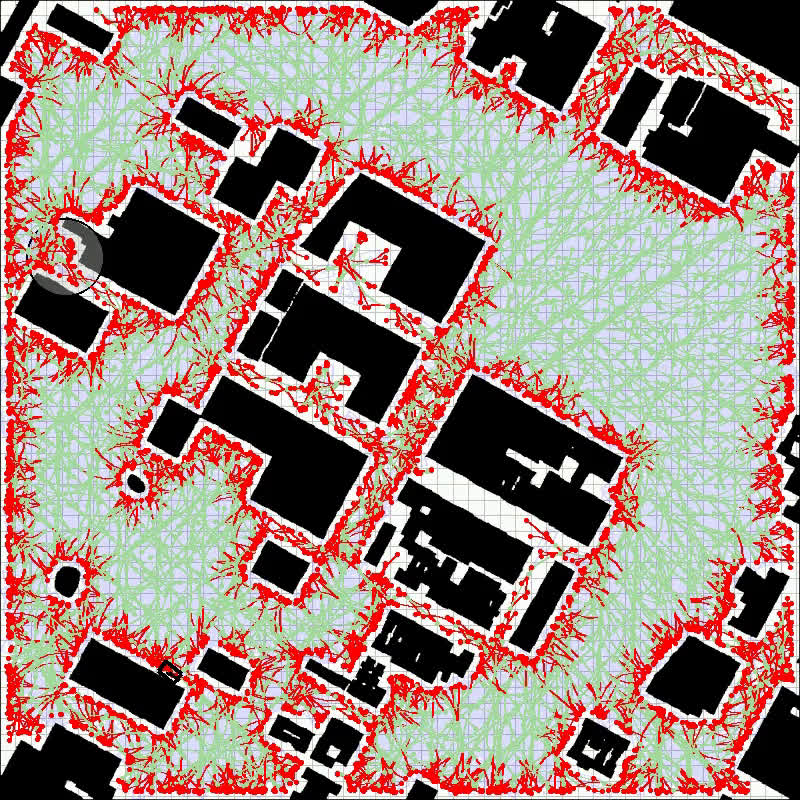}
  \caption{Waste expansions}
  \label{fig:igha_anecdote_3DoF_2}
\end{subfigure}
\hspace{1pt}
\begin{subfigure}{0.23\linewidth}
  \includegraphics[width=\linewidth]{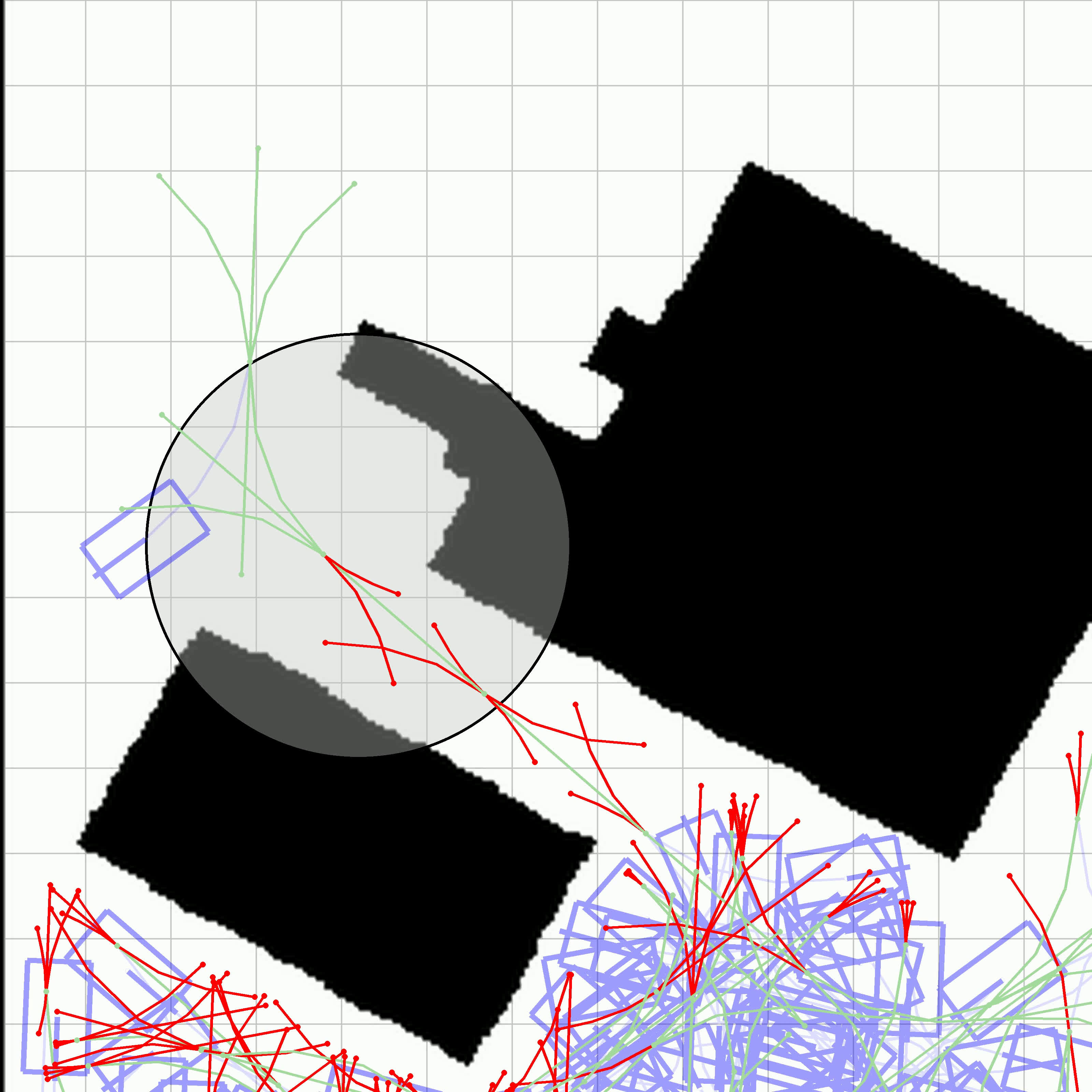}
  \caption{What happened?}
  \label{fig:igha_anecdote_3DoF_3}
\end{subfigure}

\begin{subfigure}{0.23\linewidth}
  \includegraphics[width=\linewidth]{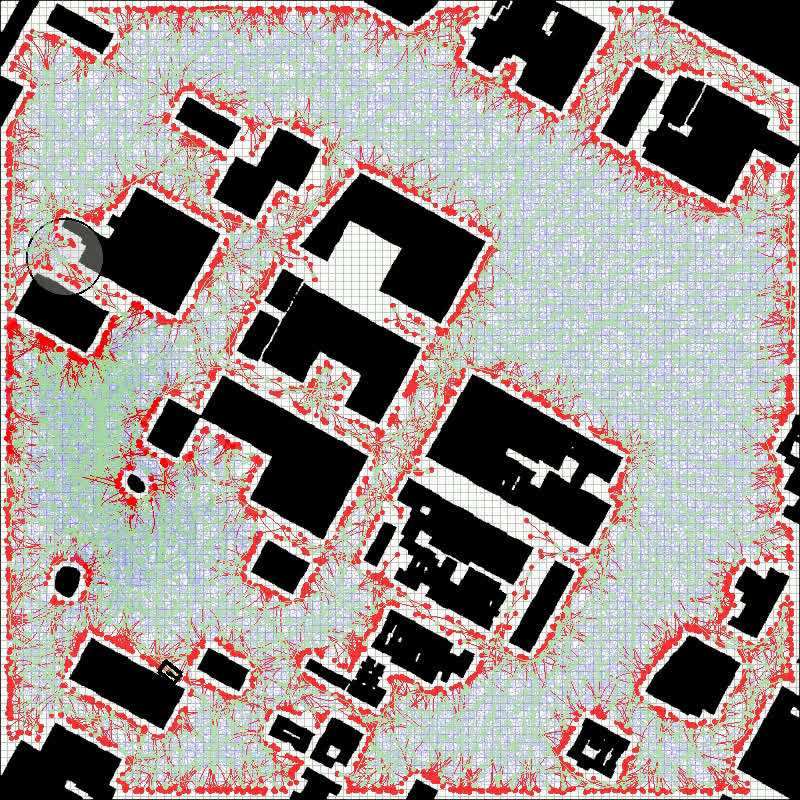}
  \caption{Increase resolution}
  \label{fig:igha_anecdote_3DoF_4}
\end{subfigure}
\hspace{1pt}
\begin{subfigure}{0.23\linewidth}
  \includegraphics[width=\linewidth]{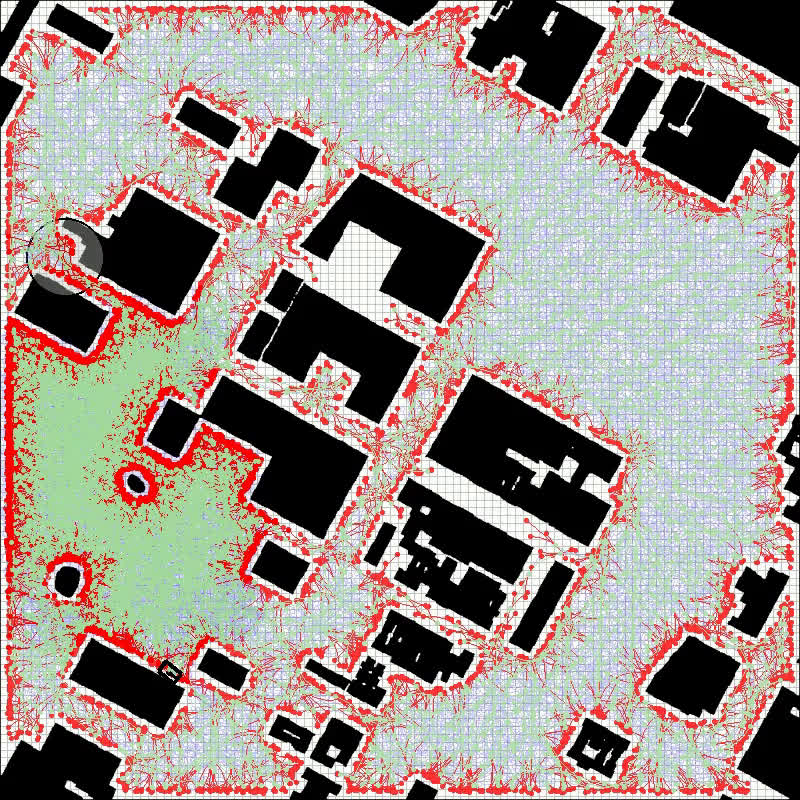}
  \caption{Lots of expansions}
  \label{fig:igha_anecdote_3DoF_5}
\end{subfigure}
\hspace{1pt}
\begin{subfigure}{0.23\linewidth}
  \includegraphics[width=\linewidth]{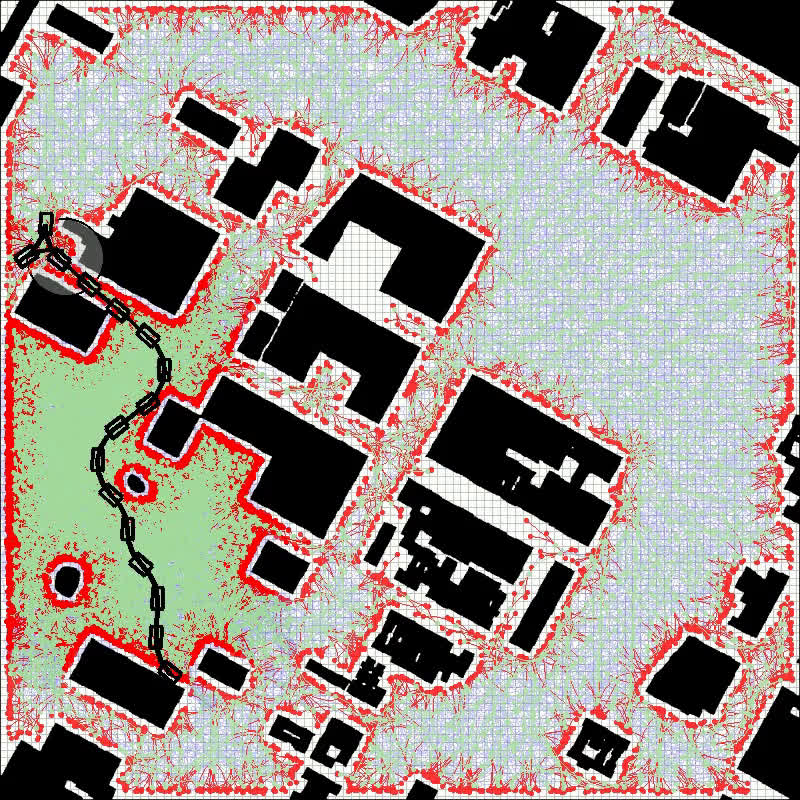}
  \caption{Encounter path}
  \label{fig:igha_anecdote_3DoF_6}
\end{subfigure}
\hspace{1pt}
\begin{subfigure}{0.23\linewidth}
  \includegraphics[width=\linewidth]{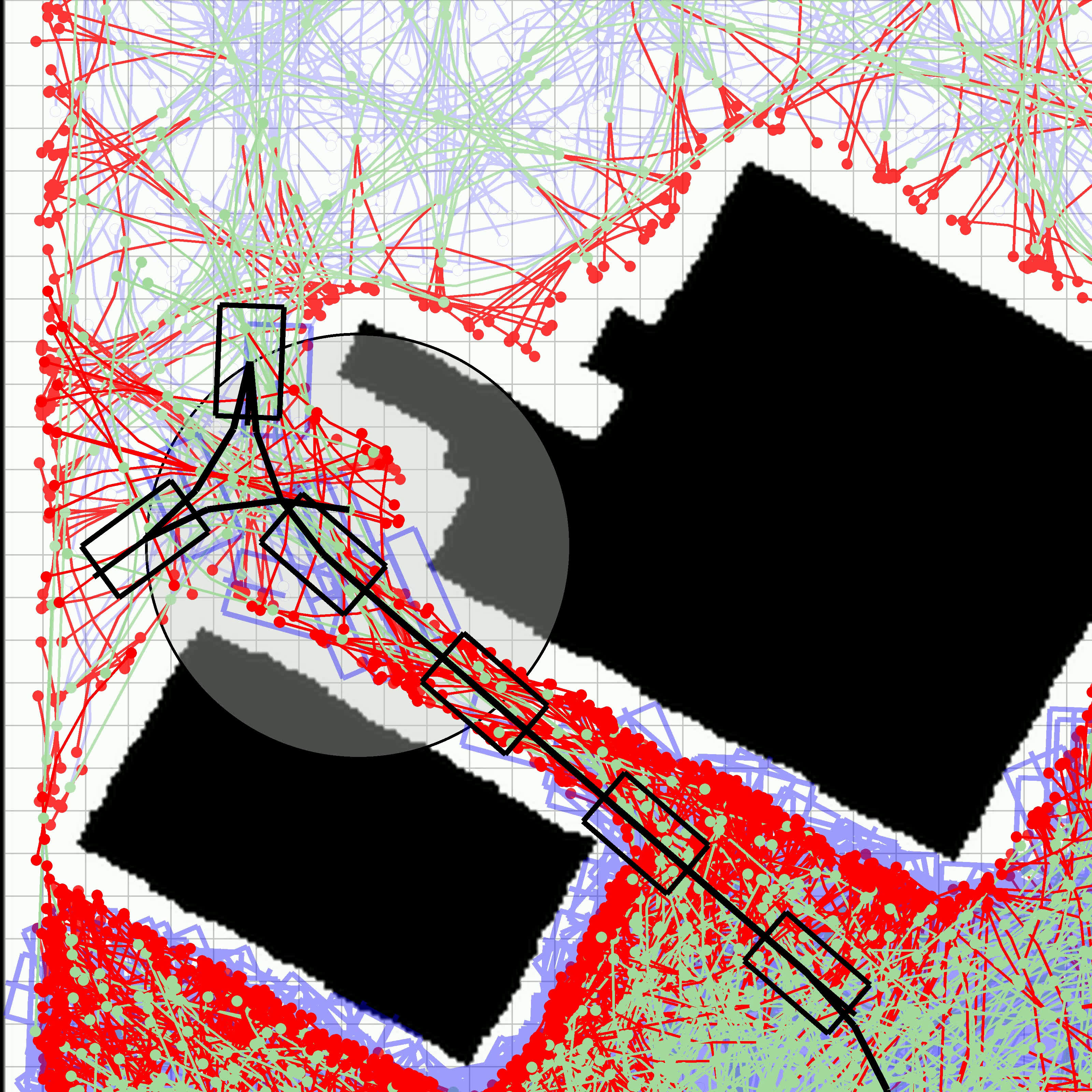}
  \caption{Off by one?}
  \label{fig:igha_anecdote_3DoF_7}
\end{subfigure}
\end{minipage}
\begin{minipage}{0.33\linewidth}
\centering
\begin{subfigure}{\linewidth}
  \includegraphics[width=\linewidth]{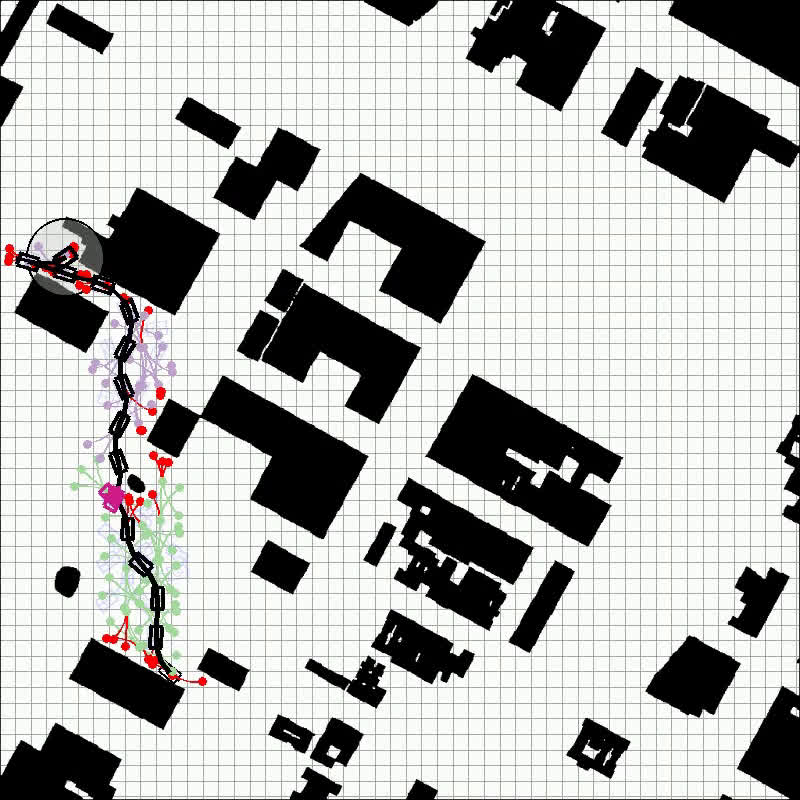}
  \caption{Why does \biighastar not struggle?}
  \label{fig:biigha_anecdote_3DoF_0}
\end{subfigure}
\end{minipage}

\caption{
Comparison of \ighastar~\subref{fig:igha_anecdote_3DoF_0}–\subref{fig:igha_anecdote_3DoF_7} and \biighastar~\subref{fig:biigha_anecdote_3DoF_0}.
%
For \ighastar, forward expansions are \legendsolid{0.651,0.859,0.627}, frozen (non-dominant) vertices \legendsolid{0.4,0.4,0.9}, and collision states \legendsolid{1.000,0.750,0.750}.
For \biighastar, \forward search uses the same color, \backward search is \legendsolid{0.761,0.647,0.812}, lighter shades denote frozen vertices, and the \textit{near-meet} is \legendsolid{1.000,0.0,0.750}.
In both, the final path is \legendsolid{0.000,0.000,0.000}.
The goal condition is the Reeds–Shepp~\citep{reeds1990optimal} distance to $v_g \in \mathbb{R}^3$
%
Although \ighastar appears to pass through the goal region in~\subref{fig:igha_anecdote_3DoF_1}, the true goal configuration is in $(x,y,\theta)$, and the goal condition requires the Reeds-Shepp~\citep{reeds1990optimal} distance to be within a threshold(the circle is a visual aid), it freezes the critical vertex leading to $v_g$~(\subref{fig:igha_anecdote_3DoF_3}), forcing exploration of the remaining free space~(\subref{fig:igha_anecdote_3DoF_2}), resolution refinement after exhaustion~(\subref{fig:igha_anecdote_3DoF_4},~\ref{fig:igha_anecdote_3DoF_5}), and eventual discovery only after many expansions~(\subref{fig:igha_anecdote_3DoF_6}).
\biighastar avoids this failure mode~(\subref{fig:biigha_anecdote_3DoF_0}) by mitigating the frozen vertex barrier.
%
}
\label{fig:main}
\vspace{-20pt}
\end{figure*}

\section{Related Work}

\subsection{Bidirectional Graph Search}
Bidirectional graph search~(see, e.g.,~\cite{SiagSFS23,SahamFCS17,ShahamFSR18,ShperbergFSSH19,ShperbergFSSH19b,SturtevantSFC20,alcazar2020unifying}) algorithms have long been employed to improve search efficiency.
Recent theoretical advances in this domain have primarily followed two distinct conceptual frameworks \cite{sturtevant2018brief,wang2025meet}. 
The first, grounded in the theory of Must-Expand Pairs~(\textbf{MEP})~\cite{eckerle2017sufficient}, led to algorithms like Near-Optimal Bidirectional Search~(NBS) \cite{chen2017nbs}. 
NBS introduced mechanisms to certify optimality even when the two search frontiers do not meet exactly at a vertex.
The second framework focuses on the Meet-in-the-Middle Property~(\textbf{MMP}), which guarantees that no state is expanded if its cost exceeds half the optimal solution cost~\cite{HolteFSSC17},
with recent work~\citep{wang2025meet} showing ways to reduce computational burden of computing termination conditions~\cite{sadhukhan2012new,sewell2021dynamically}.
However, both MEP and MMP approaches typically assume a static graph in which vertices can be checked for exact overlap. 
This assumption often breaks down in kinodynamic planning for off-road autonomy

\subsection{Kinodynamic Bidirectional Motion-Planning}\label{sec: kinodynamic_bi_planning}
In kinodynamic planning,
unlike geometric graph search, the forward and backward trees in continuous
do not share exact states.
Consequently, establishing a connection requires solving a local BVP to bridge the gap between the two trees \cite{lavalle2006planning}. 
Early work on Kinodynamic Bi-RRT utilized the property of a \textit{local controllability radius}(\textbf{LCR}) to assume that if two states are sufficiently close, they can be connected \cite{lavalle2001randomized}.
Similarly, recent sampling-based methods verify a connection between vertices using trajectory optimization~\citep{choudhury2016regionally}, or exploit the reverse tree only when vertices fall within a specific radius \cite{nayak2022bidirectional}.
Approaches such as \idba~\citep{ortiz2024idba, ortiz2024idbrrt} rely on the concept of bounded discontinuity for connecting motion primitives.
At their core, they all rely on the ability to connect vertices within some LCR, either by assuming that a connection is possible or at least to filter out unpromising candidates.
Our algorithm also uses the notion of LCR, but remains orthogonal to how the connection is actually verified.

\subsection{Kinodynamic planning \& off road autonomy}
For unstructured and ever changing environments with complex dynamics, such as in the case of off-road autonomy~\cite{han2023model, terrainCNN}, precomputing motion primitives~\citep{ortiz2024idba, ortiz2024idbrrt}
is often computationally infeasible \cite{damm2023terrain} as the motion primitives/graph becomes invalid as soon as a new map is provided by the perception system.
Algorithms like Hybrid $A^*$~(HA*) \cite{dolgov2010path} and \sst \cite{SST} commit to tree search with
with the system's dynamics model, and leverage the notion of \textit{approximate dominance}~\citep{bellman_curse} to reduce the computational complexity of these problems, 
However, these approaches couple the tree discovery with the discretization resolution.
While the \ighastar~\citep{talia2025incremental} breaks this coupling outside the inner search process, it is still susceptible to the frozen-vertex-barrier.
In this work, we extend the \ighastar framework into the bidirectional setting, and show how it mitigates this issue.
\section{Concise overview of \ighastar}\label{sec:igha_overview}
The Incremental Generalized Hybrid \astar~(\ighastar)~\citep{talia2025incremental}~ is shown in Alg.~\ref{alg:bi-igha}, and the left half of the flow chart in Fig.~\ref{fig:flowchart}, without the portions highlighted in \textcolor[HTML]{0AA844}{green}.
Here, the state space is discretized at some resolution $R$ and a function $\hat{v} : V \times \mathbb{N} \rightarrow V$ maps a vertex $v$ to the vertex that currently dominates $v$'s region (in implementation, this ``region'' is a grid cell).
When a new vertex $v'$ enters this region, if $(v'.g < v.g)$, $v'$ becomes the dominant vertex, otherwise $v$ remains dominant.
\ighastar can move up and down \textbf{\textit{levels}} $\mathbf{\ell}=[\ell_0,\ell_1,\dots \ell_N]$ of \textbf{resolution} corresponding \textbf{\textit{resolutions}} $\mathbf{R} = [R_0, R_2, \dots, R_N]$ 
~(we use ``level'' and ``resolution'' interchangeably).
\ighastar retains non-dominant vertices generated during search but marks them as \textit{InActive} (it maintains $Q_v.\text{Active},~Q_v.\text{InActive}$ queues separately) or \textbf{\textit{Frozen}}.
Vertices in $Q_v$ are ordered by $f$-value, where $f=v.g+h(v)$, where $v.g$ is $v$'s cost-to-come and $h(v)$ is its heuristic cost-to-go.

The search is started at level 0, corresponding to $R_0$, and $Q_v$ is initialized with $v_r$ (root vertex), which is then {\activate}d.
The planner enters the forward search~(Line~\ref{ll:igha_search_loop_start}), 
expands vertices by order of priority from $Q_v.\text{Active}$,
marking the newly generated vertices as active or inactive~(frozen) based on approximate dominance.
The search continues until we've either run out of vertices to expand, the $f$-value of the best active vertex in $Q_v$, ($Q_v.\text{Active}.\textsc{peek}().f$) exceeds the best cost path $w(\hat{\pi})$ (Branch-and-Bound)
we've satisfied the goal check condition and generated a path~(\textsc{EmitPath}), or \shift decides to break the search loop~(Lines~\ref{ll:igha_search_loop_start}, ~\ref{ll:igha_peek_active},~\ref{ll:igha_goal_check},~\ref{ll:shift_call} resp).
\shift decides what the search resolution will be for the next \textbf{iteration} of the search~(Line~\ref{ll:igha_search_loop_start}).

When the loop is broken, \ighastar removes all vertices worse than $w(\hat{\pi})$~(Branch-and-Bound
~Line~\ref{ll:igha_bound}),
the vertices are projected to the new resolution~(Line~\ref{ll:igha_project}),
and the inner search loop~(Line~\ref{ll:igha_search_loop_start}) is bootstrapped with newly {\activate}d~(Line~\ref{ll:igha_activate}) vertices.
This process continues until $Q_v=\emptyset$.
We refer the reader to the original work~\citep{talia2025incremental} for a detailed review of the method.
Importantly, \ighastar provides guarantees on monotonic cost improvement, and termination with finite number of expansions when at least one solution has been found.
\setlength{\textfloatsep}{0pt}
\begin{figure}
    \centering
    \includegraphics[width=0.9\linewidth]{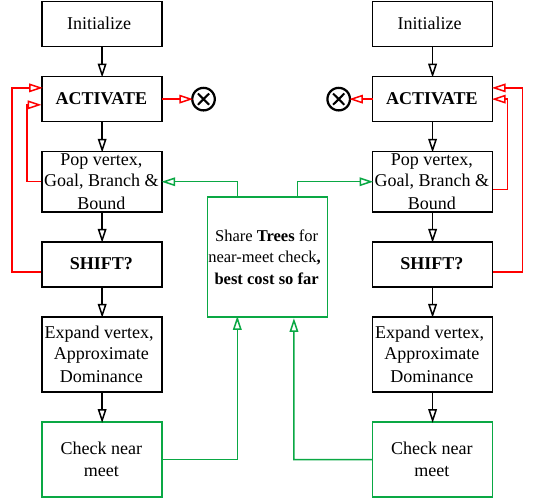}
    \caption{\biighastar runs two anti-parallel \ighastar searches with information sharing while checking for near-meets between the trees of them}
    \label{fig:flowchart}
\end{figure}
\section{Bidirectional \ighastar\ (Bi-\ighastar)}
\label{sec:bi-igha}
In this work, we aim to improve the efficiency of \ighastar by extending it into the bidirectional setting.
\textbf{\textit{Importantly, the tenets behind our design decisions are}} to
retain the theoretical guarantees of \ighastar and to keep it as simple as possible.
Throughout the text, we use \forward and \backward to refer to \biighastar's internal searches, and forward/reverse to refer to instances of \ighastar.
\begin{algorithm}[t]
\small
\caption{\small Bidirectional \ighastar (\biighastar)
}
\label{alg:bi-igha}
\textbf{Input:} Start \textcolor[HTML]{0AA844}{set $V_s$}, Goal set $V_g$, Resolution sequence $\mathcal{R}$\\
\textbf{Output:} Least-cost path or $\emptyset$ if \textsc{Failure}
\begin{algorithmic}[1]
\ForAll{
\textcolor[HTML]{0AA844}{$D \in \{\forward, \backward\}$}
}\label{ll:biigha_forward_backward_parallel}
    \State $v_r\text{.Active} \gets \textbf{true}$
    \State $Q_v \gets \{v_r\}$\ \Comment{Priority: $v.g + h(v)$}
    \State $\hat{\pi} \gets \emptyset$, $l \gets 0$, $l' \gets 0$
    \While{$Q_v\neq\emptyset$}\Comment{\textbf{\textit{Iterate}} over $D$ search}
        \State \activate()\label{ll:igha_activate}
        \While{$Q_v.\text{Active} \neq \emptyset$}\label{ll:igha_search_loop_start}
            \If{$Q_v.\text{Active.}\textsc{peek}().f \geq w(\hat{\pi})$}
                \textbf{break}\label{ll:igha_peek_active}
            \EndIf
            \If{$Q_v.\text{Active.}\textsc{peek}() \in V_T$}\label{ll:igha_goal_check}
                \State $\hat{\pi} \gets \textsc{EmitPath}()$\label{ll:igha_emit_path}
                \State \textbf{break}
            \EndIf
            \If{\textsc{Shift}()}\label{ll:shift_call}
                \textbf{break}\Comment{Internally update $l'$}
            \EndIf
            \State $u \gets Q_v.\text{Active.}\textsc{pop}()$\label{ll:igha_popactive}
            \ForAll{$v \in \Succ(u)$}
                \If{$v.g < \hat{v}(v, R_l).g$}
                    \State $\hat{v}(v, R_l)\text{.Active} \gets \textbf{false}$\label{ll:igha-dom-deactivate}
                    \State $\hat{v}(v, R_l) \gets v$
                    \State $v\text{.Active} \gets \textbf{true}$
                \Else
                    \State $v\text{.Active} \gets \textbf{false}$\label{ll:igha-deactivate}
                \EndIf
                \State $Q_v.\textsc{Insert}(v)$\label{ll:igha_Q_v_save}
                \textcolor[HTML]{0AA844}
                {   
                    \If{$\nearmeet(v, R_{LCR}, \bar{T})$}\label{ll:biigha_near_meet_check}
                        \State $\bar{v} \gets \textsc{GetNearMeetVertex}()$\label{ll:biigha_get_near_meet_vertex}
                        \State $\pi_{LCR} \gets v.g + \bar{v}.g + w(v, \bar{v})$\label{ll:get_near_meet_cost}
                        \If{$\pi_{LCR} < \hat{\pi}$}\label{ll:biigha_near_meet_monotonic_improvement}
                            \State $\hat{\pi} \gets \textsc{EmitPath}()$\label{ll:biigha_emit_near_meet_path}
                        \EndIf
                    \EndIf
                }
            \EndFor
        \EndWhile
        \State $\textsc{Bound}(w(\hat{\pi}))$\Comment{$\forall v\in Q_v$ Remove $v:v.f>w(\hat{\pi})$}\label{ll:igha_bound}
        \State $\textsc{Project}(R_l)$\Comment{$l \gets l'$, update $R_l$, and recompute $\hat{v}$}\label{ll:igha_project}
    \EndWhile
\EndFor
\State \Return $\hat{\pi}$
\end{algorithmic}
\end{algorithm}

\subsection{Problem Statement}
We consider the problem of computing near-optimal paths over two implicitly-defined trees $T_f, T_b$~(\forward and \backward resp.), where for both trees, $T = (V, E)$, where each edge $(u, v) \in E$ has an associated positive cost given by a function $w : E \rightarrow \mathbb{R}^+$.
$T_f$ is rooted at a (given) start vertex $v_s \in V_s,~V_s\subseteq V$, and the objective is to reach any vertex in a goal set $V_g \subseteq V$ while minimizing the accumulated edge cost.
The \backward search flips this process. 
$T_b$ is rooted at a (given) goal vertex $v_g \in V_g,~V_g\subseteq V$, and the objective is to reach any vertex in a start set $V_s$ while minimizing the accumulated edge cost.
The tree structures are defined implicitly by a successor function $\Succ$ that returns the set of valid children for any given vertex, and is reversible. 
With slight abuse of notation, we set $w(\pi)$ to be the cost of a path $\pi = \langle v_1, v_2, \ldots, v_n \rangle$. 
Namely,
$w(\pi):=\sum_{i=1}^{n-1} w(v_i, v_{i+1})$ and define
$w(\emptyset) = \infty$ to denote the cost of failure.
We assume that obtaining a path from $v_s$ to $V_g$ is functionally equivalent to obtaining a path from $v_g$ to $V_s$.
We assume access to a heuristic function $h: V \rightarrow \mathbb{R}_{\ge 0}$ that is admissible, providing an estimate of the remaining cost to the goal for the \forward search~($h_f$) and to the start for the \backward search~($h_b$).

\subsection{Challenges in making \ighastar bidirectional}
While bidirectional search offers the potential of exponential speed-ups~\citep{wang2025meet, HolteFSSC17} through the reduction of the search depth,
these techniques typically assume a static graph, where vertices overlap exactly, and that the dynamics are symmetric forward and backward~\citep{kwa1989bs}.
Once a vertex is reached, it is assumed that calling $\Succ$ would produce the same neighbours from both sides, and so we no longer have to continue the search \textit{through} that vertex.
The requirement of exact overlap is important for efficiency in finding \textit{a path}, and the requirement of producing the same neighbours on both sides is important for reducing search effort \textit{after} finding the best path (early termination).

We leverage the notion of a \textit{local controllability radius}, standard in kinodynamic planning~\citep{lavalle2006planning}, for detecting and verifying \textbf{\emph{near meets}} to achieve efficiency in finding \textit{a path}.
While various ways of detecting a near meet exist (assuming a connection is possible~\citep{lavalle2006planning, ortiz2024idbrrt}, or trajectory optimization~\citep{choudhury2016regionally}),
at their core, these approaches filter candidates when vertices from the opposing trees are outside some distance metric ($R_{LCR}$) of each other. 

\textit{After} this connection, we can't assume that the neighbours from either direction for the connected vertices will be identical, as the vertices aren't \textit{exactly} the same.
Thus, the techniques mentioned earlier for reducing the search effort \textit{after} the connection can't be used within our setting.
As the dynamically generated (sub)-trees in \ighastar are always subject to activation and approximate dominance, the $g$ value of the vertex is always an \textit{upper bound}, not a \textit{lower bound}; thus it would make for an inadmissible heuristic and can not be used in our setting.

\subsection{The \biighastar Framework}
The Bidirectional \ighastar~(\biighastar)~(Alg.~\ref{alg:bi-igha}, Fig.~\ref{fig:flowchart}) runs two (\forward,~\backward) \ighastar searches~(Line~\ref{ll:biigha_forward_backward_parallel}) that 
share tree information for detecting near meets and the current best cost $w(\hat{\pi})$--this information sharing is an ever-present background process.
The root vertex $v_r \gets v_s$ and the target set $V_T \gets V_g$ for \forward, and for the $v_r \gets v_g$ and $V_T \gets V_s$ for the \backward search.
For simplicity, we pop vertices from the \forward and \backward $Q_v$ symmetrically.
The \nearmeet~(Line~\ref{ll:biigha_near_meet_check}) is a blackbox function for detecting and verifying tree connections.
When a \nearmeet exists between a vertex $v \in T$ and multiple vertices 
$V_\nearmeet$ in the opposing tree $\bar{T}$, 
\gnmv returns the vertex
$
\bar{v}^* = \argmin_{\bar{v} \in V_\nearmeet}
\big( v.g + \bar{v}.g + w(v,\bar{v}) \big),
$
i.e., the one minimizing the \forward-\backward path cost 
$\pi_{LCR}$~(Line~\ref{ll:get_near_meet_cost}).
Note that the \forward and \backward \ighastar are not level-locked; they can change levels independently.
If one terminates before the other, the remainder continues running to termination, as a path may exist in one tree that does not exist in the other direction's tree.
Intuitively, \biighastar retains \ighastar's guarantees on monotonic cost improvement and termination because near meets still ensure monotonic cost improvement (Line~\ref{ll:biigha_near_meet_monotonic_improvement}), and one \ighastar does not directly affect the other {\ighastar}'s \shift or \activate.

\begin{figure*}
\centering
\begin{subfigure}{0.16\linewidth}
  \centering
  \includegraphics[width=\linewidth]{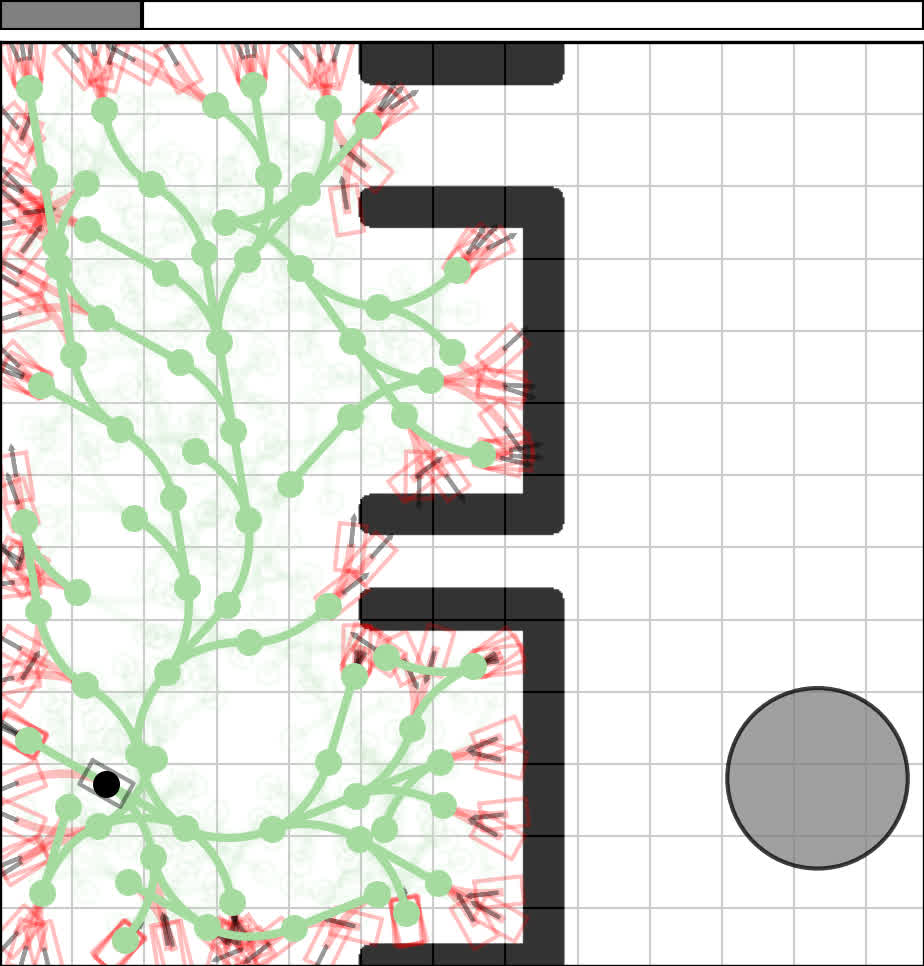}
  \caption{1$\times$Res., Uni}
  \label{fig:main_0}
\end{subfigure}
\begin{subfigure}{0.16\linewidth}
  \centering
  \includegraphics[width=\linewidth]{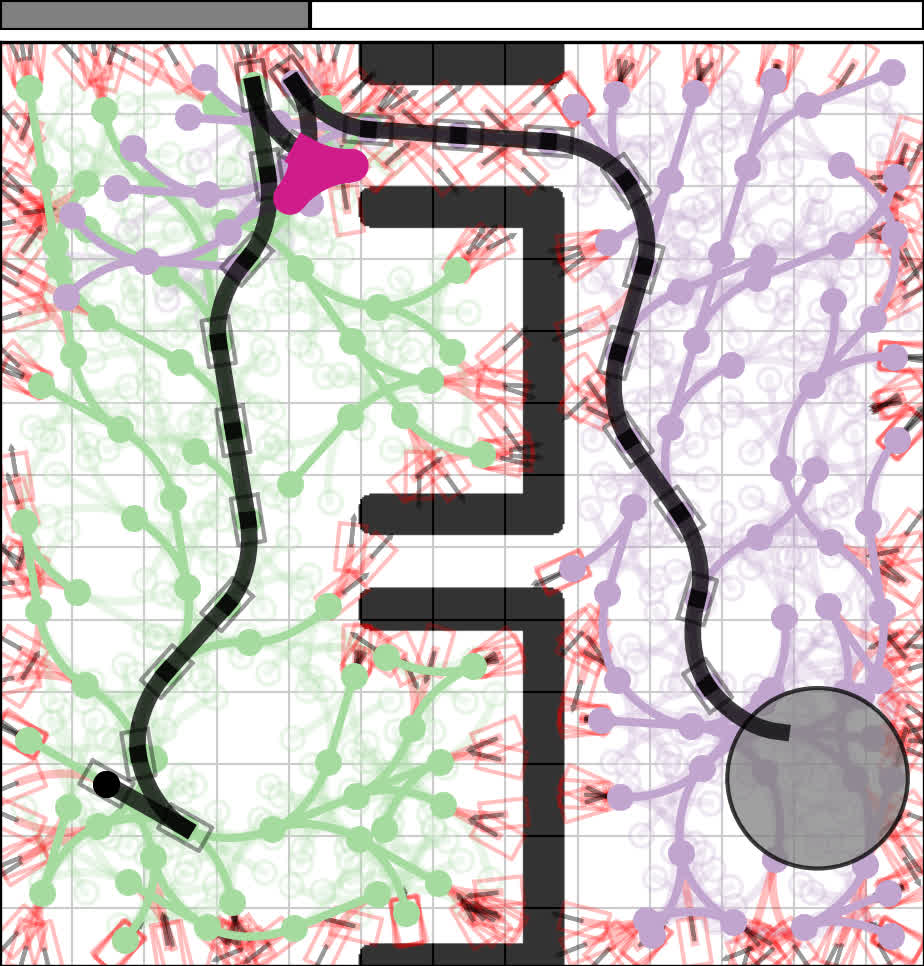}
  \caption{1$\times$Res., Bi}
  \label{fig:main_1}
\end{subfigure}
\begin{subfigure}{0.16\linewidth}
  \centering
  \includegraphics[width=\linewidth]{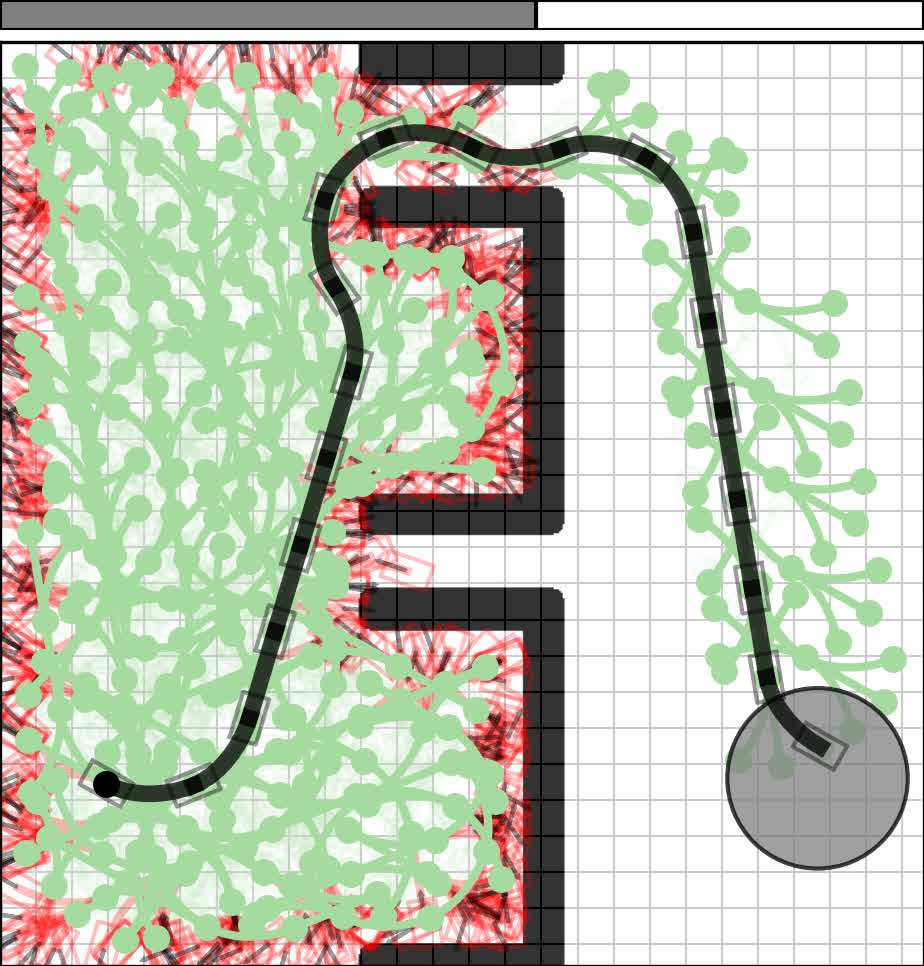}
  \caption{4$\times$Res., Uni}
  \label{fig:main_2}
\end{subfigure}
\begin{subfigure}{0.16\linewidth}
  \centering
  \includegraphics[width=\linewidth]{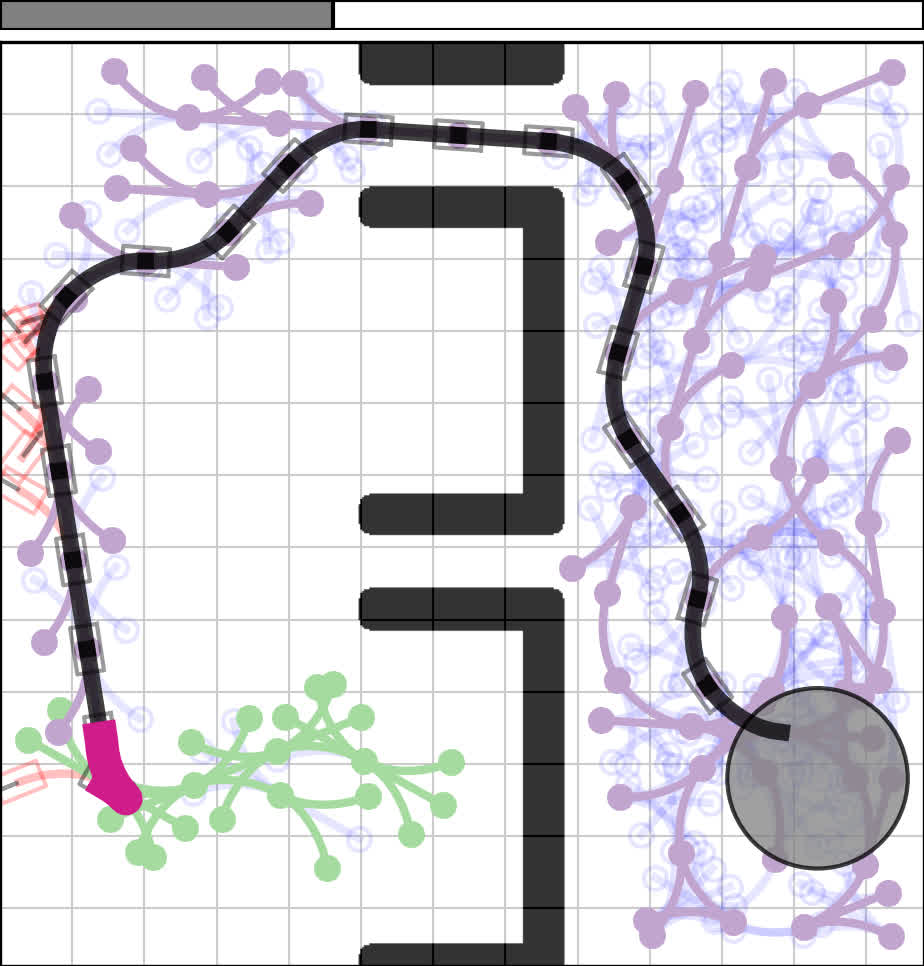}
  \caption{1$\times$Res., Bi}
  \label{fig:main_3}
\end{subfigure}
\begin{subfigure}{0.16\linewidth}
  \centering
  \includegraphics[width=\linewidth]{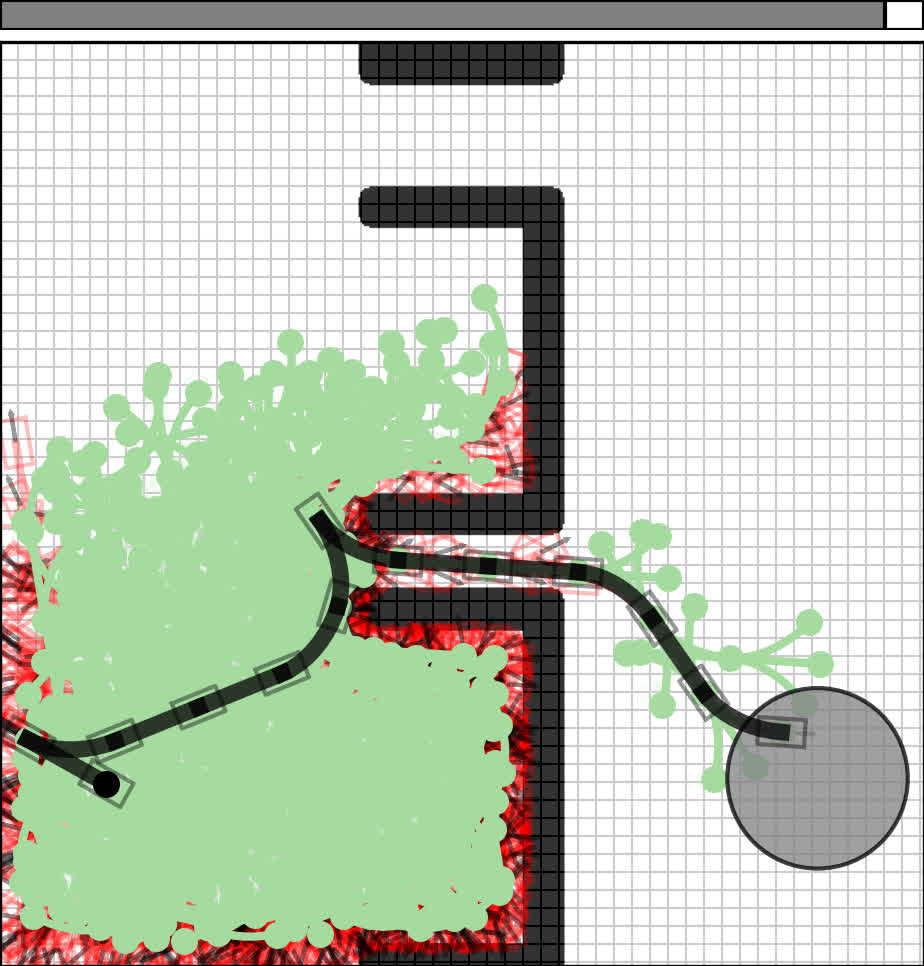}
  \caption{16$\times$Res., Uni}
  \label{fig:main_4}
\end{subfigure}
\begin{subfigure}{0.16\linewidth}
  \centering
  \includegraphics[width=\linewidth]{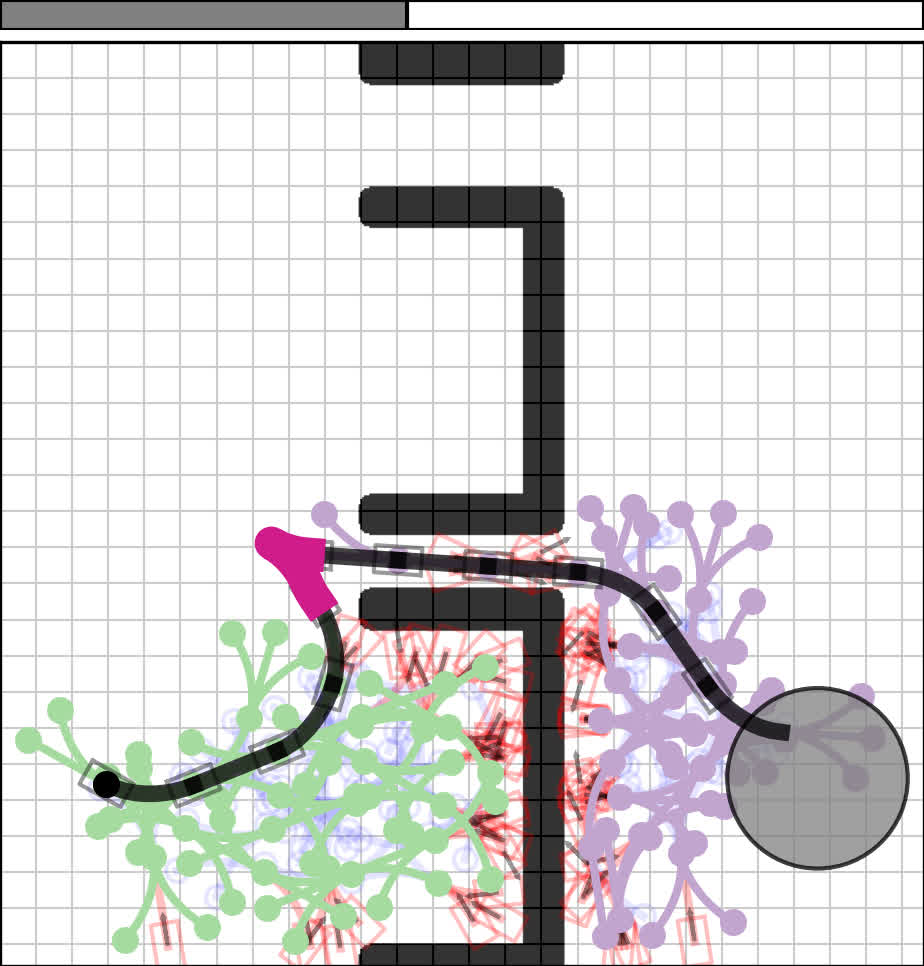}
  \caption{4$\times$Res., Bi}
  \label{fig:main_5}
\end{subfigure}
\caption{
Comparison of \ighastar and \biighastar illustrating \biighastar attaining equivalent solutions at lower resolutions, manifesting bidirectional mitigation.
The color scheme matches Fig.~\ref{fig:main}; the grey bar indicates total expansions at each snapshot. 
For 
clarity, we only render the vertices expanded at an iteration, and use an \activate variant that resets $Q_v$, which removes the advantage of vertex reuse.
At the base resolution, \ighastar fails~(\subref{fig:main_0}), whereas the \backward search in \biighastar succeeds~(\subref{fig:main_1}). Unidirectional \ighastar must refine the resolution to obtain first a suboptimal~(\subref{fig:main_2}) and then an optimal solution~(\subref{fig:main_4}). In contrast, \biighastar recovers equivalent-cost solutions at lower resolutions via near-meets (\legendsolid{1.000,0.0,0.750})~(\subref{fig:main_3},~\subref{fig:main_5}), using much fewer expansions.
}
\label{fig:main_explanation}
\vspace{-15pt}
\end{figure*}
\section{Formal Guarantees and Structural Analysis}
\label{sec:bi-igha-theory}
We first show that Bi-\ighastar preserves the core guarantees of \ighastar.
We then formalize the structural interaction between discretization and freezing that motivates bidirectionality.
Throughout this section we assume:

\begin{enumerate}[label=\textbf{A\arabic*}]
    \item 
    The heuristics $h_f,h_b$ never overestimate the true cost-to-go to $V_g$, $V_s$ resp.\label{heuristic_assumption}
    \item 
    Each vertex has at most $b<\infty$ successors under $\mathrm{Succ}(\cdot)$.\label{tree_assumption}
    \item 
    Every edge cost satisfies $w(u,v)\ge \epsilon > 0$.\label{positive_edge_assumption}
\end{enumerate}

These are identical to the assumptions in~\citep{talia2025incremental}.
From~\citep{talia2025incremental}, the (relevant) invariants of \ighastar are:
\begin{enumerate}[label=\textbf{I\arabic*}]
    \item Only \text{Active} vertices are expanded during search.\label{inv:active_only}
    \item At least one vertex is {\activate}d until $Q_v=\emptyset$.\label{inv:exp_one_vert}
\end{enumerate}

\subsection{Near-Meets and Stitching}
\begin{definition}[Local Controllability Radius]
\label{def:lcr}
A system has local controllability radius $R_{LCR}$ if for any $x,y$ with $\|x-y\| < R_{LCR}$, there exists a dynamically feasible trajectory connecting $x$ to $y$.
\end{definition}

\begin{definition}[\nearmeet]\label{def:nearmeet}
Let $v \in T$ and let $\bar{T}$ denote the opposing tree, and $R_{LCR}$ represent the local controllability radius.
\nearmeet
returns \textit{true} if
$
\exists \bar{v} \in \bar{T}
\text{ such that }
||v-\bar{v}|| < R_{LCR}
$
and the dynamically feasible trajectory
connecting $\bar{v}$ to $v$ is collision free.
It returns \textit{false} otherwise.
\end{definition}

\subsection{Equivalence to \ighastar}
\begin{definition}[\forward-\backward-Paths]\label{fb_paths}
When \nearmeet returns true for a vertex $v\in T_f$ with $T_b$, (Lines~\ref{ll:biigha_near_meet_check}-\ref{ll:get_near_meet_cost}), we call it a \forward-\backward path.
We denote the set of paths from $v_s$ to $v_g$ resulting from {\nearmeet}s as $\Pi_{fb}$.
\end{definition}

\begin{theorem}[\ighastar Equivalence]\label{thm:igha_equal}
    Given assumptions \ref{heuristic_assumption}-\ref{positive_edge_assumption}, \biighastar retains \ighastar's guarantees of monotonic cost improvement and termination with finite expansions.
\end{theorem}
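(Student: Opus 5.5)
The plan is to split the statement into its two parts, monotonic cost improvement and termination with finite expansions, and to reduce each to the corresponding guarantee of unidirectional \ighastar from~\citep{talia2025incremental}. The key structural observation is that Alg.~\ref{alg:bi-igha} differs from \ighastar in only two ways. The first is the green block (Lines~\ref{ll:biigha_near_meet_check}--\ref{ll:biigha_emit_near_meet_path}). The second is that the incumbent $\hat{\pi}$ is shared between the \forward and \backward searches. Neither modification touches \activate, \shift, \textsc{Project}, the dominance update, or the queue $Q_v$. So each of the \forward and \backward loops, viewed in isolation, is an \ighastar instance, except that its incumbent may be overwritten externally. Invariants~\ref{inv:active_only} and~\ref{inv:exp_one_vert} therefore continue to hold separately for each direction.

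\textbf{Monotonic improvement.} I would argue by examining every write to $\hat{\pi}$. There are three kinds: the goal-check emission (Line~\ref{ll:igha_emit_path}) in either direction, the near-meet emission (Line~\ref{ll:biigha_emit_near_meet_path}), and the sharing step that passes an incumbent from one direction to the other.
\begin{itemize}
\item For the goal-check emission, the \ighastar argument applies unchanged. A target vertex $u$ is popped only when $u.f = u.g + h(u) < w(\hat{\pi})$ (Line~\ref{ll:igha_peek_active}). On the target set $h(u) = 0$ holds, which follows from~\ref{heuristic_assumption} since $0 \le h(u) \le 0$ there. Hence the emitted path has cost $u.g < w(\hat{\pi})$.
\item For the near-meet emission, the guard on Line~\ref{ll:biigha_near_meet_monotonic_improvement} only writes $\hat{\pi}$ when $\pi_{LCR} < w(\hat{\pi})$. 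Here $\pi_{LCR}$ is exactly the cost of the stitched \forward-\backward path in $\Pi_{fb}$. This also needs the assumption that a path from $v_s$ to $V_g$ is equivalent to a reversed path from $v_g$ to $V_s$, so that paths found by either direction are comparable.
\item For sharing, I would specify that it only replaces an incumbent with a strictly cheaper one, i.e.\ $\hat{\pi} \gets \argmin\{w(\hat{\pi}_f), w(\hat{\pi}_b)\}$.
\end{itemize}
Together these give that $w(\hat{\pi})$ is non-increasing over the run.

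\textbf{Termination.} Here I would show that each direction individually makes finitely many expansions once a solution exists. Fix a direction and suppose $w(\hat{\pi}) = C < \infty$. Because $C$ can only decrease, every vertex expanded afterwards satisfies $g \le f < C$. By~\ref{positive_edge_assumption}, such a vertex has depth at most $C/\epsilon$. By~\ref{tree_assumption}, there are at most $\sum_{k \le C/\epsilon} b^k < \infty$ such vertices in the implicit tree. Vertices are popped from $Q_v$ rather than regenerated, so each is expanded at most once; this is the same counting used for \ighastar, and I would cite it directly. By~\ref{inv:exp_one_vert} together with \textsc{Bound}, each outer iteration either removes vertices or expands at least one, so $Q_v$ eventually empties.

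The main obstacle is showing that the external update of $\hat{\pi}$ by the other direction, or by a near-meet, cannot break \ighastar's termination argument. The worry is that it might, for instance, cause \shift to loop, or reactivate frozen vertices indefinitely. I would handle this by noting two things. First, a lowered $w(\hat{\pi})$ only makes the break condition on Line~\ref{ll:igha_peek_active} and \textsc{Bound} more aggressive, so it can only shrink the finite set above. Second, \shift and \activate depend only on the direction's own $Q_v$ and level, never on the opposing tree. Finally, since both directions terminate and run symmetrically, the whole algorithm terminates with finitely many total expansions, and this holds even when one direction finishes before the other.
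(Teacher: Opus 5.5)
Your proposal is correct and takes the same approach as the paper's sketch. Monotonic improvement comes from the guards at Lines~\ref{ll:igha_peek_active} and~\ref{ll:biigha_near_meet_monotonic_improvement}, which force every emitted path (goal-check or near-meet) to beat the incumbent, and termination with finitely many expansions follows from~\ref{tree_assumption} and~\ref{positive_edge_assumption} once a first path exists. You simply spell out what the sketch leaves implicit: $h=0$ on the target set, the $C/\epsilon$ depth bound, the min-rule for sharing $\hat{\pi}$, and the fact that the opposing search never affects \shift or \activate.
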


\begin{proof}(sketch)
    If a path is emitted from the \forward or \backward search, 
    either by the standard \forward/\backward search~(Line~\ref{ll:igha_emit_path}) or a \nearmeet~(Line~\ref{ll:biigha_emit_near_meet_path}),
    then this is because the search found a valid path that is guaranteed to be better than the existing best estimate~$w(\hat{\pi})$~(Lines~\ref{ll:igha_peek_active},~\ref{ll:biigha_near_meet_monotonic_improvement}).
    Given this, and the assumptions~\ref{tree_assumption},\ref{positive_edge_assumption}, it follows that once an initial path is found, \biighastar terminates with finite expansions.
\end{proof}

\subsection{Freezing as a Structural Barrier}
\begin{observation}[Frozen-Vertex Barrier]\label{obs:frozen_barrier}
Invariant~\ref{inv:active_only} implies that 
if a vertex is frozen ($v.\text{Active}=\text{false}$)
at a given iteration
then no descendant of $v$ can be generated at that iteration.
\end{observation}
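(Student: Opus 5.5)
The plan is to argue by induction on the number of expansions within a single search iteration. The key fact is that in Alg.~\ref{alg:bi-igha} new vertices enter the tree only through $\Succ(u)$ applied to a vertex $u$ that was popped at Line~\ref{ll:igha_popactive}. That pop draws from $Q_v.\text{Active}$, so by Invariant~\ref{inv:active_only} the vertex $u$ is Active at the moment it is expanded. A descendant $d$ of a frozen vertex $v$ must be generated along a chain $v = u_0, u_1, \dots, u_k = d$ in which each $u_{i+1} \in \Succ(u_i)$. Hence generating $d$ requires first expanding $u_0 = v$ itself. I will state this carefully: within the iteration, the first generation event of any strict descendant of $v$ must be the expansion of $v$.

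The steps, in order, are as follows. First, fix an iteration, meaning one pass of the inner loop at Line~\ref{ll:igha_search_loop_start} between consecutive calls to \activate. Second, observe that inside this loop the Active flag changes only at Lines~\ref{ll:igha-dom-deactivate}--\ref{ll:igha-deactivate}. These lines either set a newly generated vertex's flag or set an existing vertex's flag to \textbf{false}. No line inside the loop reactivates a previously frozen vertex; reactivation happens only through \activate at Line~\ref{ll:igha_activate}, which is outside the loop. Consequently, a vertex frozen at some point in the iteration stays frozen for the rest of that iteration. Third, combine this with Invariant~\ref{inv:active_only}: a frozen $v$ is never popped, so $\Succ(v)$ is never called during the iteration. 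Fourth, take the shortest generation chain from $v$ to a putative descendant $d$ generated in this iteration. Its first edge demands the expansion of $v$, which contradicts the third step.

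One subtlety needs care. A descendant of $v$ could already be present in $Q_v$ from an earlier iteration, when $v$ was Active and was expanded. The phrase ``generated at that iteration'' should be read as a newly produced successor, so such pre-existing vertices do not count. I would state this reading explicitly and note that the bidirectional additions do not change it. The \nearmeet block (Lines~\ref{ll:biigha_near_meet_check}--\ref{ll:biigha_emit_near_meet_path}) only reads the opposing tree $\bar T$ and emits paths; it never inserts vertices into $Q_v$ or toggles Active flags. The opposing search therefore cannot create descendants of $v$ in $T$ either.

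I expect the main obstacle to be the second step: showing precisely that freezing is monotone within an iteration. In particular, the dominance update at Line~\ref{ll:igha-dom-deactivate} must never flip a frozen vertex back to Active. I would settle this by direct inspection of the pseudocode, since only newly generated vertices $v$ are ever assigned \textbf{true} inside the loop. Everything else follows immediately from Invariant~\ref{inv:active_only}.
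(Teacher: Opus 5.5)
Your route is the paper's own. The paper justifies the observation in one line from Invariant~\ref{inv:active_only}: a frozen vertex is not expanded, and successors arise only from expansion. Your check that nothing inside the inner loop sets a flag back to \textbf{true}, and your remark that the \nearmeet block never inserts into $Q_v$, are useful refinements of that.

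The gap is in how you fix the hypothesis ``frozen at a given iteration,'' and it affects two steps.
\begin{enumerate}
\item Within an iteration. Step three infers from ``frozen at some point, and frozen thereafter'' that $\Succ(v)$ is never called during the iteration. That only follows if $v$ is frozen from the start of the inner loop, or from its own generation. Line~\ref{ll:igha-dom-deactivate} sets $\text{Active}=\text{false}$ on the current dominant vertex $\hat v(\cdot,R_l)$, and that vertex may already have been popped and expanded earlier in the same iteration; nothing prevents a later vertex in the same cell from having smaller $g$. Its successors were generated before it was frozen, and any Active ones can still be expanded afterwards. So descendants of this flagged vertex do appear in that iteration.
\item Across iterations. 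Your subtlety paragraph allows $v$ to have been expanded in an earlier iteration, and it discards only the pre-existing descendants themselves. But those retained descendants sit in $Q_v$ and can be made Active by \activate. Expanding them generates new descendants of $v$ in the current iteration. Step four then yields no contradiction, because the first edge of the chain came from an expansion of $v$ in a previous iteration.
\end{enumerate}
Under the reading you adopt, the observation is actually false, so no argument can close it.

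The missing idea is that a frozen vertex in the paper's sense has never been expanded. Such a vertex is a retained non-dominant vertex of $Q_v$, and expansion pops a vertex from $Q_v$ at Line~\ref{ll:igha_popactive}. Since $\Succ$ is the only source of new vertices and the tree gives each vertex a unique parent, the vertex has no descendants at all when the iteration begins.

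So state the hypothesis as: $v$ is unexpanded and InActive once \activate has run, or is generated frozen during the loop. Your monotonicity step plus Invariant~\ref{inv:active_only} then keeps $v$ unexpanded for the whole iteration, so it still has no descendants at the end. The chain argument goes through with every edge created inside the iteration. With this restriction stated up front, the rest of your argument is fine.
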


\begin{definition}[Bottleneck Vertex]
    A vertex $v_b$ is considered a bottleneck if every path from the root vertex $v_r$ to the target set $V_T$ must traverse through it.
\end{definition}

Obs.~\ref{obs:frozen_barrier} implies that if vertex $v$ is part of a solution path and it is frozen, then that solution path cannot be discovered at that iteration.
We \textbf{construct} a family of planning instances for the forward search that expose this effect.
\begin{definition}[Bottleneck Family $\{\mathcal{P}_k\}$]\label{def:bottleneck_family}
For a forward-only \ighastar with a naive rule, for each $k \ge 1$, let $\mathcal{P}_k$ denote a planning instance with $k$ sequential bottleneck vertices $v^k_b$, such that another vertex $v^i_\text{dead}=\hat{v}(v^i_b,R_i)$  dominates it,
where $v^i_\text{dead}$ does not lead to $V_g$ but $M_i>M\forall i$ vertices that do not, where $M$ is an arbitrarily large constant.
Finally, assume that at any iteration there are $|\Pi_{fb}|\geq 1$ 
paths that do not require forward expansion of $v^0_b$ and require O(1) expansions beyond those required to reach $v^0_b$.
\end{definition}

\begin{observation}[Forward Freezing Blowup]
\label{thm:forward_blowup}
On $\mathcal{P}_k$, follows from Obs.~\ref{obs:frozen_barrier} that
for $k$ bottlenecks, \ighastar expands least $\Omega(M_0)$ expansions if \shift refines to the highest possible resolution after the first iteration and $\Omega(\sum^k_{i=0}M_i)$ if not~(from Inv.~\ref{inv:exp_one_vert}).
\end{observation}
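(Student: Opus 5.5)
The argument rests on Obs.~\ref{obs:frozen_barrier} and the structure of $\mathcal{P}_k$ (Def.~\ref{def:bottleneck_family}). We treat the two regimes of \shift separately and count the vertices that forward-only \ighastar is forced to expand before it can generate any vertex of $V_g$.

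First we fix an iteration at level $\ell_i$ in which the bottleneck $v^i_b$ is frozen, because $v^i_\text{dead}=\hat{v}(v^i_b,R_i)$ has strictly smaller $g$. By Obs.~\ref{obs:frozen_barrier}, no descendant of $v^i_b$ is generated in that iteration. Since $v^i_b$ is a bottleneck, every path from $v_r$ to $V_g$ passes through it. Hence no goal vertex can be popped at Line~\ref{ll:igha_goal_check}, and, as no solution exists yet, the branch-and-bound break at Line~\ref{ll:igha_peek_active} cannot fire because $w(\hat{\pi})=\infty$. The inner loop therefore either exhausts $Q_v.\text{Active}$ or is stopped by \shift. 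By Inv.~\ref{inv:active_only}, the only vertices it can expand are Active descendants of $v^i_\text{dead}$ and other active non-solution vertices. By construction these number at least $M_i$, and we take the construction to force their expansion before the inner loop exits. This gives the lower bound $M_i$ for that iteration.

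In the regime where \shift jumps to the finest resolution after the first iteration, only the first iteration is spent at level $0$. There $v^0_b$ is already frozen behind $v^0_\text{dead}$, so the bound above yields $\Omega(M_0)$ expansions before a solution is found. In the regime where \shift does not jump, Inv.~\ref{inv:exp_one_vert} keeps the search activating vertices, and the levels are traversed one at a time. At each level $i$, bottleneck $v^i_b$ is dominated and frozen, so the iteration again wastes at least $M_i$ expansions before the next refinement unfreezes it. Summing over $i=0,\dots,k$ gives $\Omega(\sum_{i=0}^k M_i)$. Because $M_i>M$ with $M$ arbitrary, this exceeds the $O(1)$-beyond-$v^0_b$ cost of the paths in $\Pi_{fb}$, which is the contrast the observation is meant to expose.

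The main obstacle is that the counting depends on how ``\shift does not refine'' and the order of expansions are specified. For iteration $i$ to really pay $M_i$, three things must hold: the dead-end vertices must be expanded before \shift breaks the loop; bottlenecks $v^0,\dots,v^{i-1}_b$ must already be passable at that level; and $v^i_b$ must still be dominated at $R_i$. I would make these explicit as part of the construction of $\mathcal{P}_k$, taking \shift to break only on exhaustion and taking the $M_i$ dead vertices to have $f$-values below those of the solution-supporting vertices. With these conventions the remaining steps are direct applications of Obs.~\ref{obs:frozen_barrier} and the invariants.
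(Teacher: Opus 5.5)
Your argument is correct and follows the paper's own one-line justification. Obs.~\ref{obs:frozen_barrier} is applied at each iteration where a bottleneck is frozen to force the $M_i$ wasted expansions, and Inv.~\ref{inv:exp_one_vert} carries the search through successive levels so the per-level costs add up to $\Omega(\sum_i M_i)$. The extra conditions you make explicit (\shift breaking only on exhaustion, the dead-end vertices being Active and expanded first, and earlier bottlenecks being unfrozen at $R_i$ while $v^i_b$ is still dominated) are exactly what the paper's informal definition of $\mathcal{P}_k$ leaves implicit, so stating them tightens the proof rather than changing it.
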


Existence of forward-backward \nearmeet opportunities is typical in practice~\citep{kuffner2000rrt}.
In the general configuration, with a naive rule, $v\in Q_v$ does not contain direct information about where its successors lead.
The sequence of plots in Fig.~\ref{fig:igha_anecdote_3DoF_0}-\ref{fig:igha_anecdote_3DoF_7} shows a concrete example of this.
\ighastar's search tree passes near the goal region~(Fig.~\ref{fig:igha_anecdote_3DoF_1}), but freezes a crucial vertex leading to the goal~(Fig.~\ref{fig:igha_anecdote_3DoF_3}).
Now, \ighastar explores all the dead-end nodes~(Fig.~\ref{fig:igha_anecdote_3DoF_4}) before refining resolution and finally expanding the crucial vertex~(Fig.~\ref{fig:igha_anecdote_3DoF_7}).
\textbf{The blowup arises from freezing, not from search depth alone.}

\begin{observation}
On $\mathcal{P}_k$, through access to $\Pi_{fb}$ via \nearmeet, \biighastar finds a path from $v_s$ to $v_g$ in $O(1)$ additional expansions beyond reaching $v^0_b$.
\end{observation}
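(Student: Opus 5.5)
The argument rests on the last clause of Definition~\ref{def:bottleneck_family}. At every iteration there is at least one \forward-\backward path in $\Pi_{fb}$ that avoids a forward expansion of $v^0_b$. Each such path needs only $O(1)$ expansions beyond those required to reach $v^0_b$. The plan is to show that \biighastar actually realises one of these paths, and that no step of it depends on $v^0_b$ being \emph{Active} in the \forward search. That is exactly what breaks the Frozen-Vertex Barrier of Obs.~\ref{obs:frozen_barrier}.

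First, fix the iteration at which the \forward search generates $v^0_b$. Let $E_0$ be the number of expansions used up to that point, so $v^0_b$ is generated, and possibly frozen by $v^0_\text{dead}$. Pick $\pi\in\Pi_{fb}$ as guaranteed by the definition, and write it as a forward prefix ending at some $v\in T_f$, a near-meet edge $(v,\bar v)$ with $\|v-\bar v\|<R_{LCR}$ and collision-free connecting trajectory, and a backward suffix from $\bar v$ to $v_g$.

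Second, argue that the near-meet is detected. \nearmeet is called on Line~\ref{ll:biigha_near_meet_check} for \emph{every} generated successor, regardless of whether it is marked Active or frozen (Lines~\ref{ll:igha-dom-deactivate}--\ref{ll:igha_Q_v_save}). So as soon as either endpoint of the near-meet pair is generated and its partner already lies in the opposing tree, Definition~\ref{def:nearmeet} makes \nearmeet return true. \gnmv then returns a minimiser $\bar v^*$ with cost at most $w(\pi)<\infty=w(\hat\pi)$, so Line~\ref{ll:biigha_near_meet_monotonic_improvement} fires and \textsc{EmitPath} returns a valid $v_s$-to-$v_g$ path. Since the pops from \forward and \backward alternate symmetrically, the $O(1)$ expansions the definition attributes to $\pi$ translate into at most a constant factor, about $2\cdot O(1)$, of additional expansions across both searches. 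No expansion of the $M_i$ dead-end vertices under any $v^i_\text{dead}$ is needed, and no resolution change via \shift is needed either. So the total is $E_0+O(1)$.

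The main obstacle is justifying that the $O(1)$ expansions counted in Definition~\ref{def:bottleneck_family} are the ones \biighastar actually performs. Both queues are ordered by $f$. The \forward queue could in principle prefer expanding vertices in the dead-end subtree of $v^0_\text{dead}$ before the \backward search generates the partner vertex $\bar v$. I would handle this by leaning on the definition's phrasing that the path "requires $O(1)$ expansions beyond those required to reach $v^0_b$". I read this as charging the backward-side expansions, which are interleaved one-for-one with forward ones, against that constant. Because both searches advance symmetrically, the \backward search reaches $\bar v$ within $O(1)$ of its own pops, and hence within $O(1)$ total extra pops, independent of $M_0$. If this reading proves too weak, the fallback is to strengthen the instance assumption so that the required path vertices have minimal $f$ in their respective queues.
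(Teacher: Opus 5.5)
Your proposal is correct and takes essentially the same route as the paper, which treats this observation as immediate from the construction of $\mathcal{P}_k$. The final clause of Definition~\ref{def:bottleneck_family} supplies a path in $\Pi_{fb}$ that needs only $O(1)$ extra expansions and no forward expansion of $v^0_b$, the \nearmeet check on every generated successor (Lines~\ref{ll:biigha_near_meet_check}--\ref{ll:biigha_emit_near_meet_path}) emits it, and the $f$-ordering concern you raise is absorbed by that same instance assumption exactly as you read it, so no strengthening is needed.
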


\begin{corollary}[Structural Separation]
\label{cor:separation}
There exists a family of 
planning 
instances for which forward-only \ighastar may require $\Omega\!\left(\sum_i M_i\right)$ expansions, while \biighastar requires $O(1)$ additional expansions beyond reaching $v^0_b$.
\end{corollary}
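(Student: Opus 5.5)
The plan is to take the bottleneck family $\{\mathcal{P}_k\}$ of Def.~\ref{def:bottleneck_family} as the witness family and prove the two halves of the statement separately. The lower bound comes from Obs.~\ref{thm:forward_blowup}, and the upper bound from the observation immediately preceding the corollary. Since the corollary is existential, it suffices to show that both bounds hold on the same instances.

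\emph{Lower bound.} Fix $k$ and consider forward-only \ighastar on $\mathcal{P}_k$ under a naive \shift that does not jump straight to the finest resolution. At level $i$, $v^i_b$ is dominated by $v^i_\text{dead}=\hat{v}(v^i_b,R_i)$, so it is frozen. By Obs.~\ref{obs:frozen_barrier}, no descendant of $v^i_b$ is generated at that iteration. Because $v^i_b$ is a bottleneck, every solution path passes through it, so no solution is emitted at that iteration. The search therefore expands the $M_i$ dead-end vertices reachable through $v^i_\text{dead}$ until $Q_v.\text{Active}$ empties or \shift fires. The branch-and-bound break on Line~\ref{ll:igha_peek_active} never triggers early, since $w(\hat{\pi})=\infty$. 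Invariant~\ref{inv:exp_one_vert} ensures the outer loop continues and reactivates vertices, and this repeats for each bottleneck in sequence. Summing over the bottlenecks gives $\Omega(\sum_i M_i)$ expansions, which is exactly the second case of Obs.~\ref{thm:forward_blowup}.

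\emph{Upper bound.} Run \biighastar on the same instance. By assumption there is at least one path in $\Pi_{fb}$ that does not require forward expansion of $v^0_b$ and needs only $O(1)$ expansions beyond reaching $v^0_b$. Because \forward and \backward pop symmetrically, the \backward tree has grown comparably by the time \forward reaches $v^0_b$. Within $O(1)$ further expansions, a generated vertex $v$ satisfies \nearmeet (Def.~\ref{def:nearmeet}) on Line~\ref{ll:biigha_near_meet_check}. Since $\hat{\pi}=\emptyset$ and so $w(\hat{\pi})=\infty$, the check on Line~\ref{ll:biigha_near_meet_monotonic_improvement} passes and a path is emitted. This path is valid by Theorem~\ref{thm:igha_equal}, which keeps the monotonic-improvement guarantee intact.

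The main obstacle is making the upper bound rigorous. The near-meet check happens at generation time, not expansion time, so it does not matter whether the connecting forward vertex is frozen: it is still inserted into $Q_v$ and checked. This is precisely why freezing does not block the connection. I would also need to argue that the \backward tree is not itself trapped by a frozen barrier before reaching the meeting region. I would handle this by reading the $\Pi_{fb}$ hypothesis in Def.~\ref{def:bottleneck_family} as guaranteeing that the required backward vertices are generated within the allotted expansions. Finally, the comparison should be stated as counting forward expansions, or total expansions up to a factor of two under symmetric popping, so that the $O(1)$ claim is consistent.
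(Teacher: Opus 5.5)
Your proposal is correct and follows the same route as the paper, which obtains the corollary by instantiating the bottleneck family $\{\mathcal{P}_k\}$ of Def.~\ref{def:bottleneck_family}. It combines Obs.~\ref{thm:forward_blowup}, the $\Omega(\sum_i M_i)$ lower bound for forward-only \ighastar under a \shift that does not jump straight to the finest level, with the subsequent observation that \nearmeet access to $\Pi_{fb}$ yields a path within $O(1)$ further expansions. Your point that \nearmeet is evaluated at generation time, so frozen vertices can still be stitched, is a faithful reading of Alg.~\ref{alg:bi-igha}. The only slip is attributing the validity of the emitted path to Theorem~\ref{thm:igha_equal}; it actually comes from Def.~\ref{def:nearmeet}.
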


This separation shows that the improvement of \biighastar is not solely due to reduced effective depth.
\textbf{Our key insight} is that bidirectionality alters how goal-reachability information propagates through the incremental dominance structure, mitigating resolution-induced freezing effects; we term this phenomenon Bidirectional Mitigation.
When it comes to \biighastar, even when the resolution is too coarse for \ighastar to find any path from $v_s$ to $V_g$ in the forward tree~(Fig.~\ref{fig:main_0}), requiring it to go to a finer resolution~(Fig.~\ref{fig:main_2}),~\biighastar may find a path from $v_s$ to $v_g$ through $\Pi_{fb}$(Fig.~\ref{fig:main_1}), or from $v_g$ to $V_s$ in the \backward tree~(Fig.~\ref{fig:main_3}).
Importantly, while \ighastar may have to go to a very high resolution to encounter the best cost solution~(Fig.~\ref{fig:main_4}),
\textit{Bidirectional Mitigation} allows at least one of the searches in \biighastar to \textit{encounter} an \textit{equivalent} cost solution at a lower resolution~(Fig.~\ref{fig:main_5}).
At lower resolutions, the effective branching factor of \biighastar's \forward and \backward searches can be much lower via approximate dominance, which also contributes to the speed up.
We will show empirically in Sec.~\ref{sec: experiments} that this~(mitigation) happens very often when there is a speed up.
\begin{figure}[t]
  \centering
  \includegraphics[width=\linewidth]{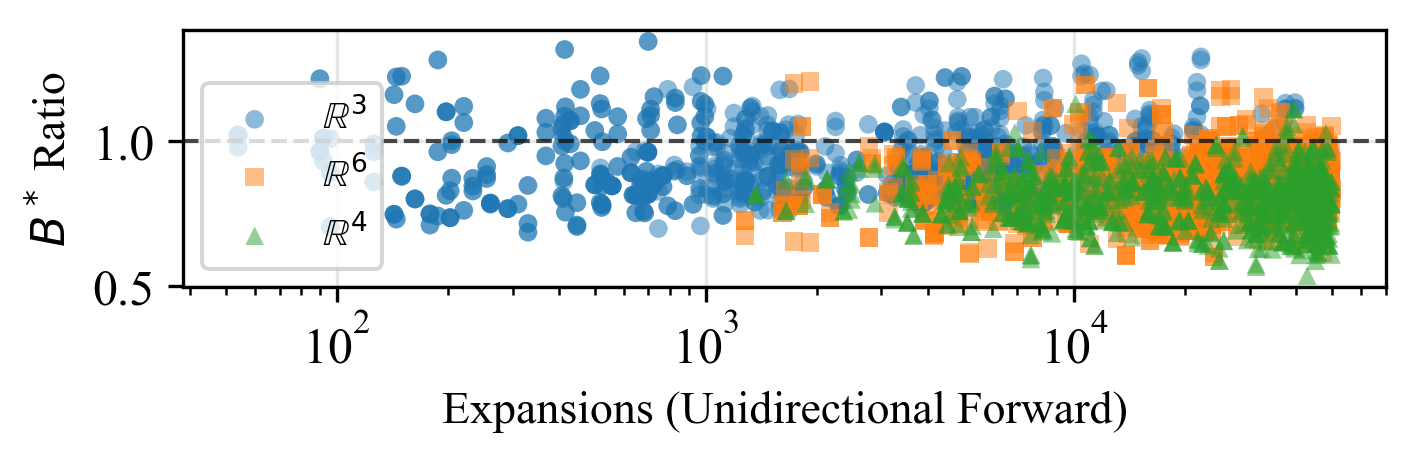}
\caption{
$B^*$ (expected/measured effective branching factor) scatter plot for \biighastar vs. \ighastar w.r.t. forward expansions, for the best cost path.
We usually expect $B^*\leq1$, however,
through bidirectional mitigation, \biighastar \textit{appears} to cross this limit.
}
\label{fig:b*_ratio}
\end{figure}
\section{Surprising Branching factors}
We defer experimental details to Sec.~\ref{sec: experiments}, \textbf{\textit{anecdotally}} illustrate 
the impact of Bidirectional mitigation.
Across $\mathbb{R}^3$, $\mathbb{R}^4$, and $\mathbb{R}^6$ systems, Fig.~\ref{fig:R3R4R6_speedup} shows that, as intuition may suggest, increasing $R_{LCR}$ raises the likelihood of near-meets and thus the speed-up, until we examine effective branching factors.

We define the \emph{empirical effective unidirectional branching factor} $b_{\text{uni}}$ for a given run of a unidirectional search as follows:
Let 
$E_\text{uni}$ be the number of node expansions
and 
$d_\text{uni}$ be the number of nodes of the path returned.
Then, $b_{\text{uni}}$ is the solution to the following equation:
$E_\text{uni}=\sum^{n=d_\text{uni}-1}_{n=0} (b_\text{uni})^n$.
Given $b_{\text{uni}}$, we can compute $E^\text{ideal}_\text{bi}$, the number of expected node expansion under an optimal symmetric bidirectional search.
Specifically, 
$E^\text{ideal}_\text{bi}:=2\sum^{n=d_\text{uni}/2-1}_{n=0}(b_\text{uni})^n$.
Given $E^\text{ideal}_\text{bi}$, we define the \emph{estimated effective bidirectional branching factor}
as the solution to the following equation:
$E^\text{ideal}_\text{bi}=\sum^{n=d_\text{uni}-1}_{n=0} (b^\text{ideal}_\text{bi})^n$.
In a completely analogous way to the unidirectional setting,  we  define the \emph{empirical effective bidirectional branching factor}~$b_{\text{bi}}$ for a given run of a bidirectional search as follows:
Let~$E_\text{bi}$ be the number of node expansions
and 
$d_\text{bi}$ be the number of nodes of the path returned.
Then, $b_{\text{bi}}$ is the solution to the following equation:
$E_\text{bi}=\sum^{n=d_\text{bi}-1}_{n=0} (b_\text{bi})^n$.

Now, consider the the ratio $B^*=\frac{b^\text{ideal}_\text{bi}}{b_\text{bi}}$.
Since $b^\text{ideal}_{\text{bi}}$ assumes an optimal symmetric bidirectional search, we expect that~$B^* \leq 1$.
Surprisingly, empirical values recorded and reported in Fig.~\ref{fig:b*_ratio} demonstrate instances for which 
$B^* > 1$.
The reason for this phenomenon is that in the computation of~$B^*$, we implicitly assumed that both planners operate on the same search structure; this is typically true for search on a static graph, but is not the case here due
mitigation of the frozen vertex barrier.
Nonetheless, as \biighastar does not explicitly enforce MMP, at a first glance of Alg.~\ref{alg:bi-igha} and flow chart Fig.~\ref{fig:flowchart}, one would \textit{never} expect this result from it.
We used Fig.~\ref{fig:b*_ratio} to show that because of the bidirectional mitigation, \biighastar \textit{\textbf{is a simply surprising and surprisingly simple algorithm}}.
\begin{figure*}[!htb]
\centering
\begin{subfigure}{0.16\linewidth}
  \includegraphics[width=\linewidth]{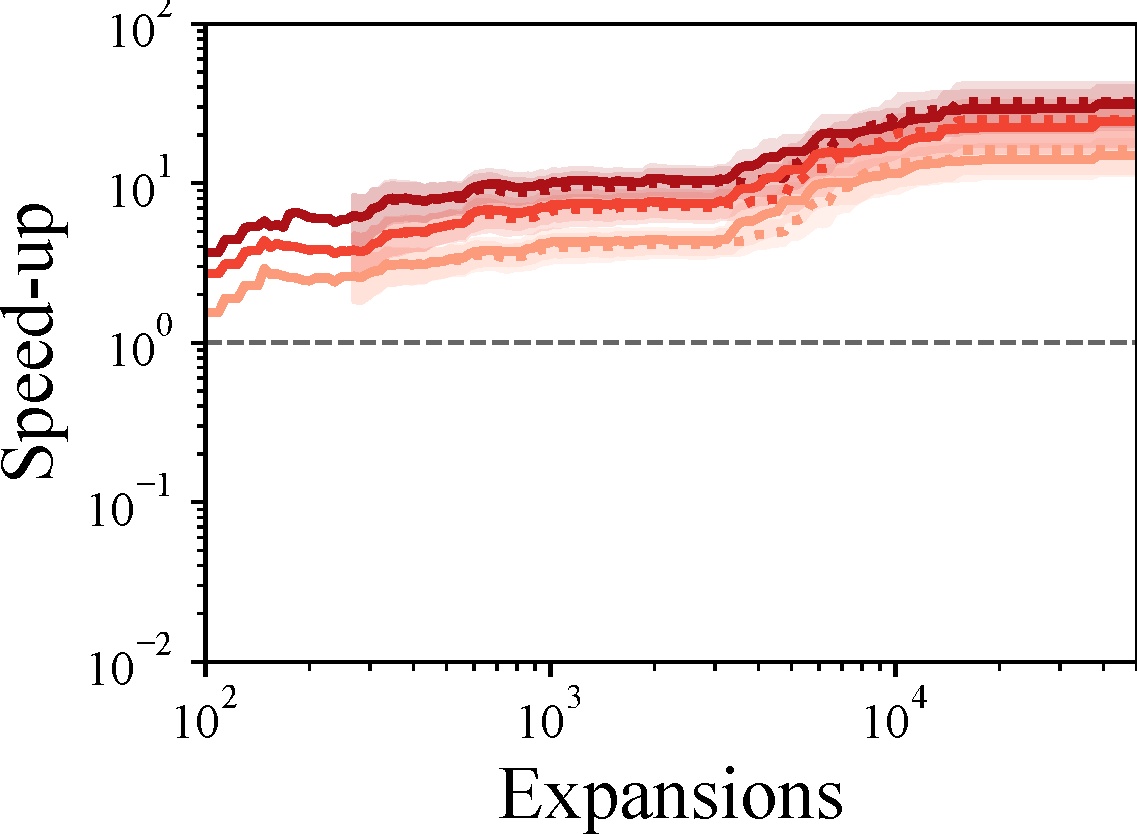}
  \caption{$\mathbb{R}^3$ For., First}
  \label{fig:kinematic_forward_only_first_path}
\end{subfigure}
\begin{subfigure}{0.16\linewidth}
  \includegraphics[width=\linewidth]{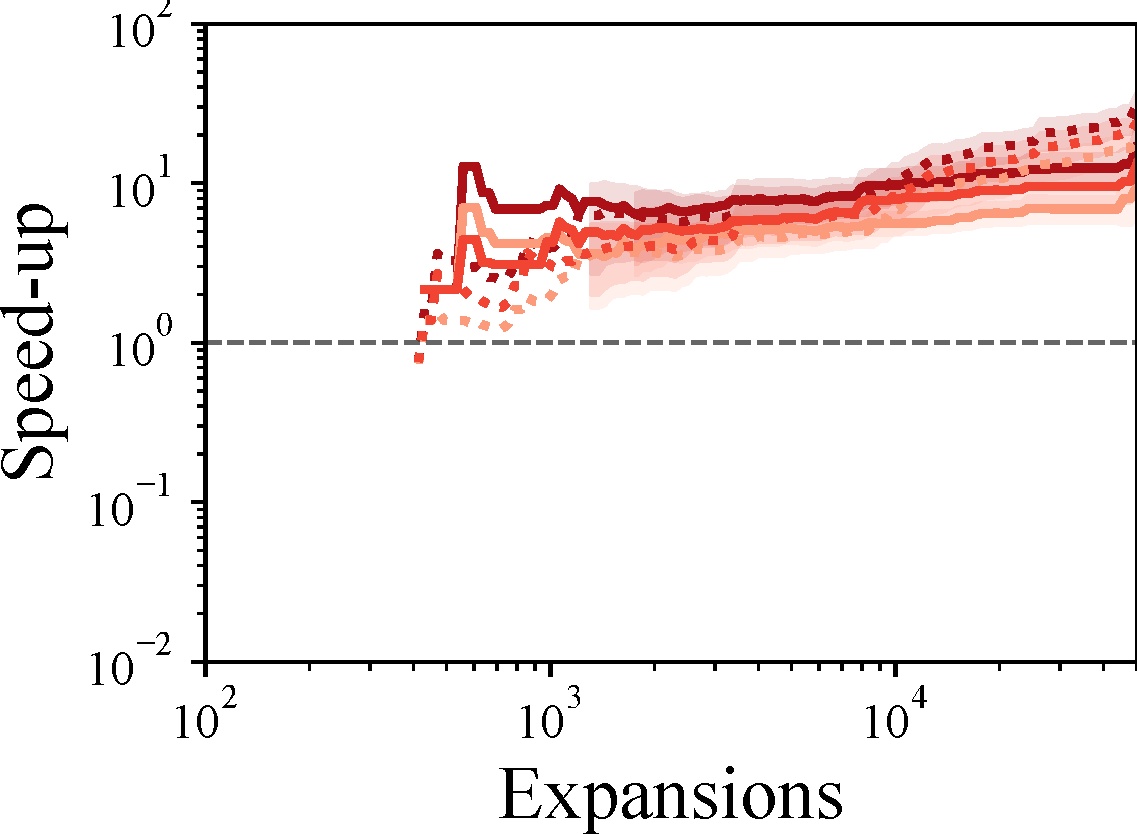}
  \caption{$\mathbb{R}^4$ For., First}
  \label{fig:kinodynamic_forward_only_first_path}
\end{subfigure}
\begin{subfigure}{0.16\linewidth}
  \includegraphics[width=\linewidth]{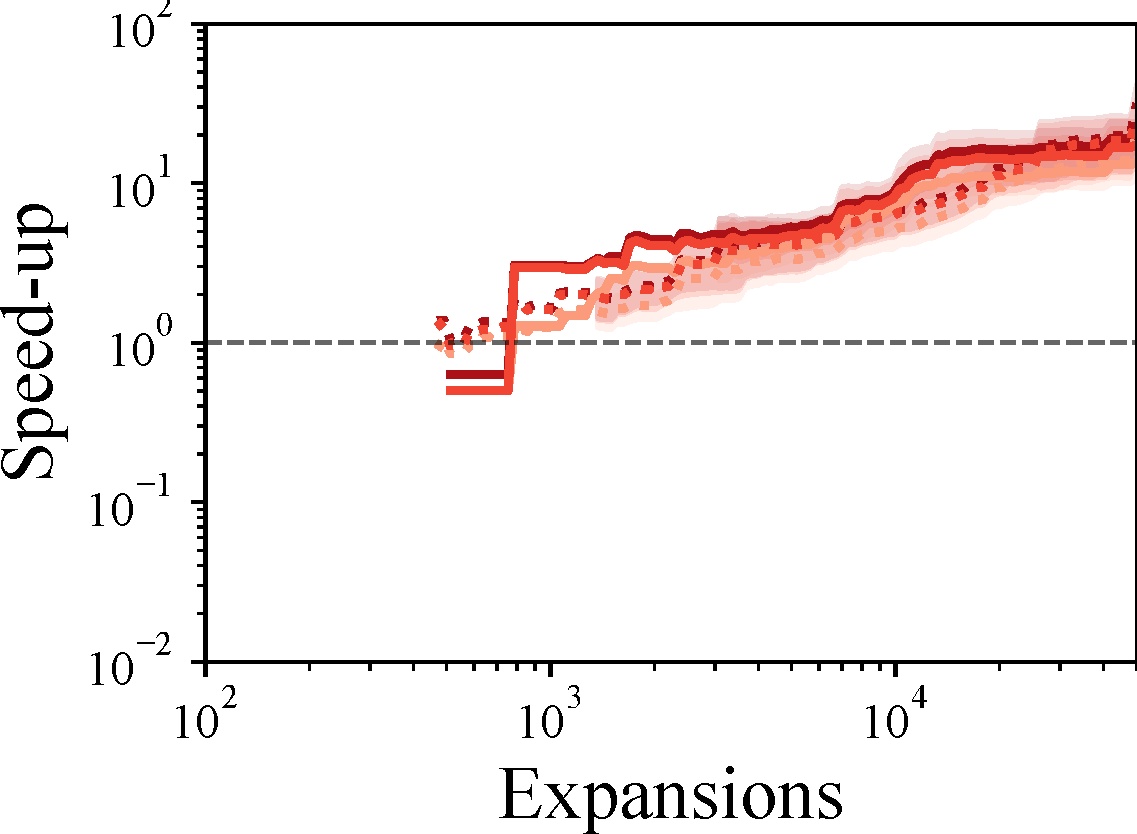}
  \caption{$\mathbb{R}^6$ For., First}
  \label{fig:hovercraft_forward_only_first_path}
\end{subfigure}
\begin{subfigure}{0.16\linewidth}
  \centering
  \includegraphics[width=\linewidth]{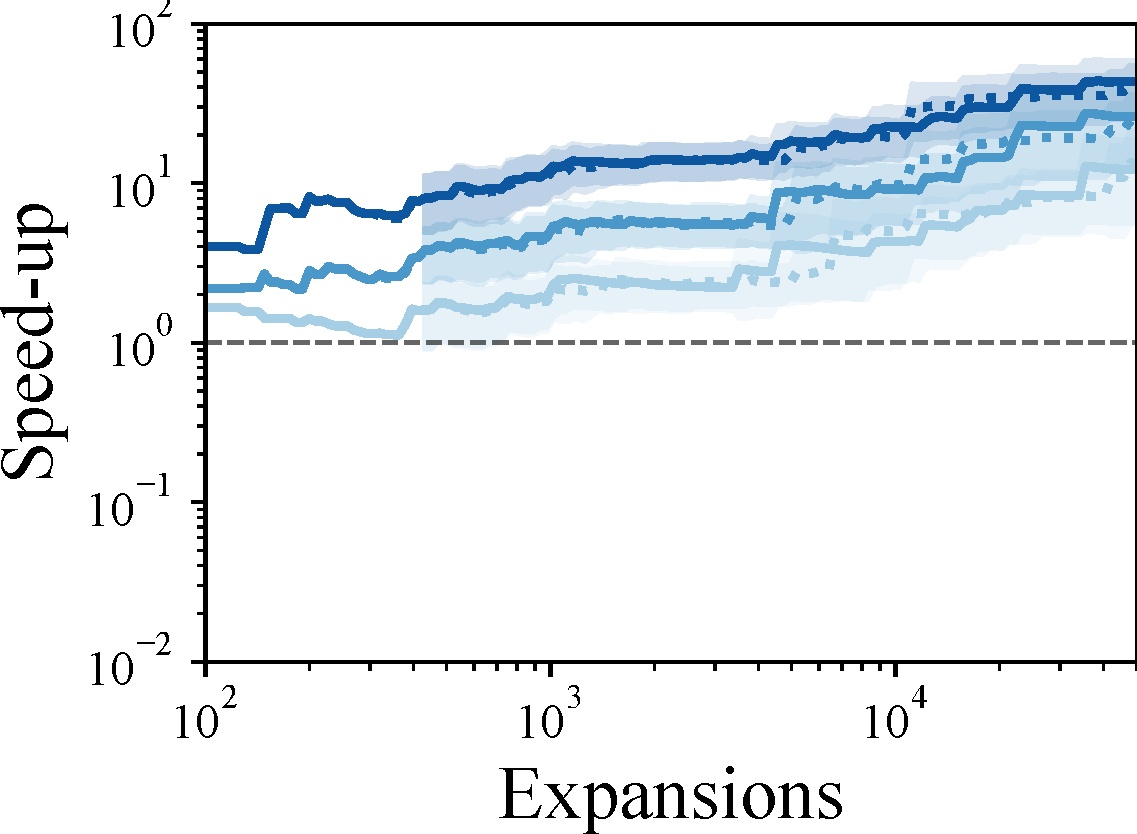}
  \caption{$\mathbb{R}^3$ For., Best}
  \label{fig:kinematic_forward_only_best_path}
\end{subfigure}
\begin{subfigure}{0.16\linewidth}
  \centering
  \includegraphics[width=\linewidth]{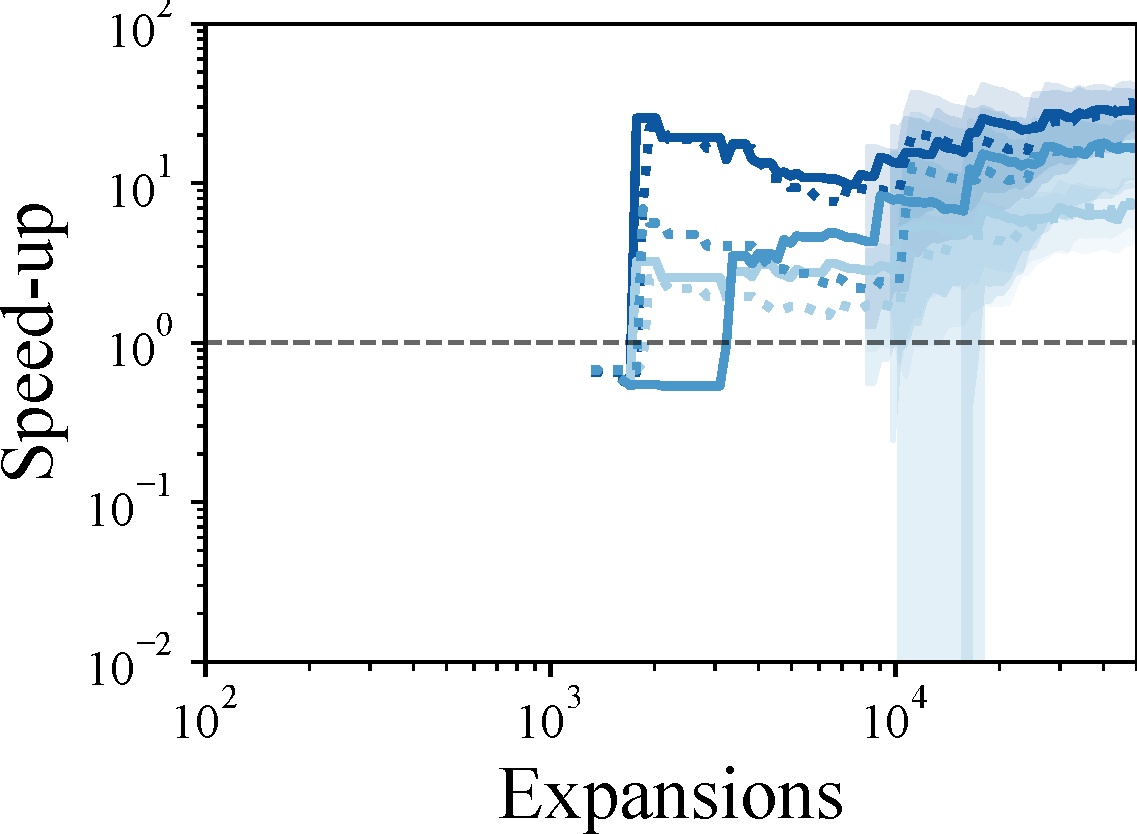}
  \caption{$\mathbb{R}^4$ For., Best}
  \label{fig:kinodynamic_forward_only_best_path}
\end{subfigure}
\begin{subfigure}{0.16\linewidth}
  \centering
  \includegraphics[width=\linewidth]{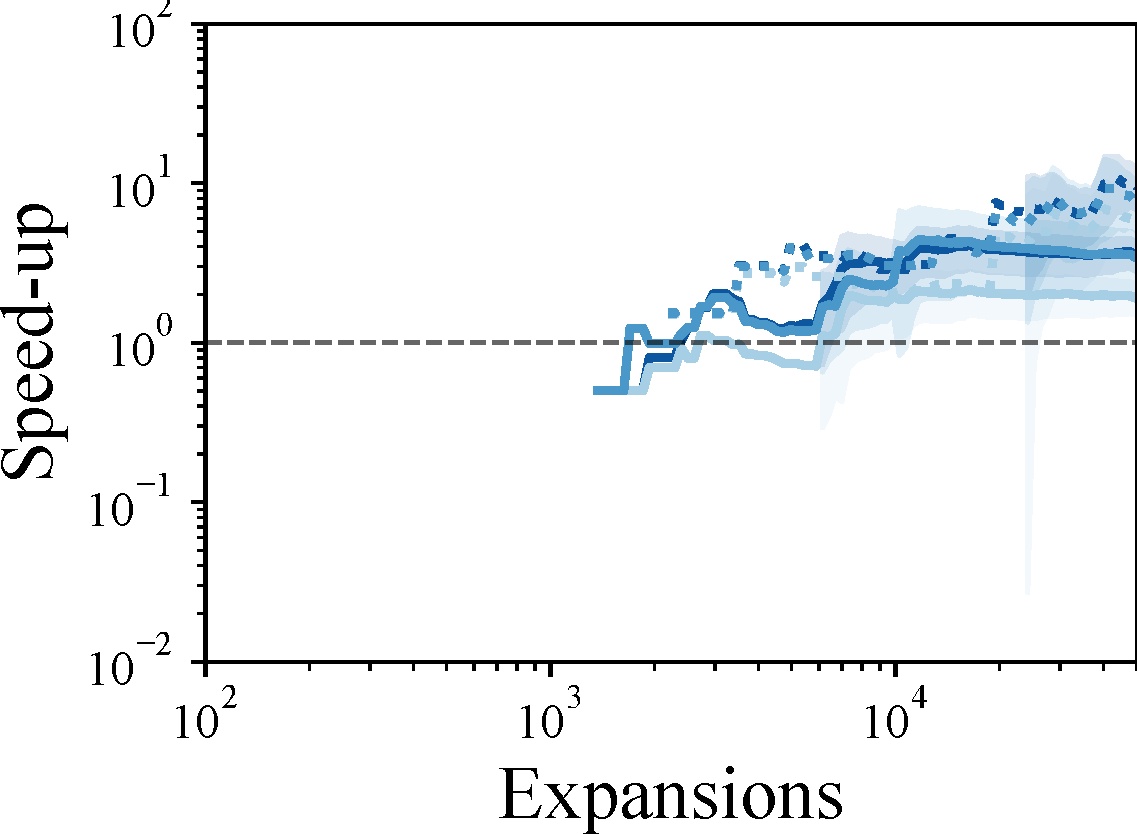}
  \caption{$\mathbb{R}^6$ For., Best}
  \label{fig:hovercraft_forward_only_best_path}
\end{subfigure}%
\\
\begin{subfigure}{0.16\linewidth}
  \includegraphics[width=\linewidth]{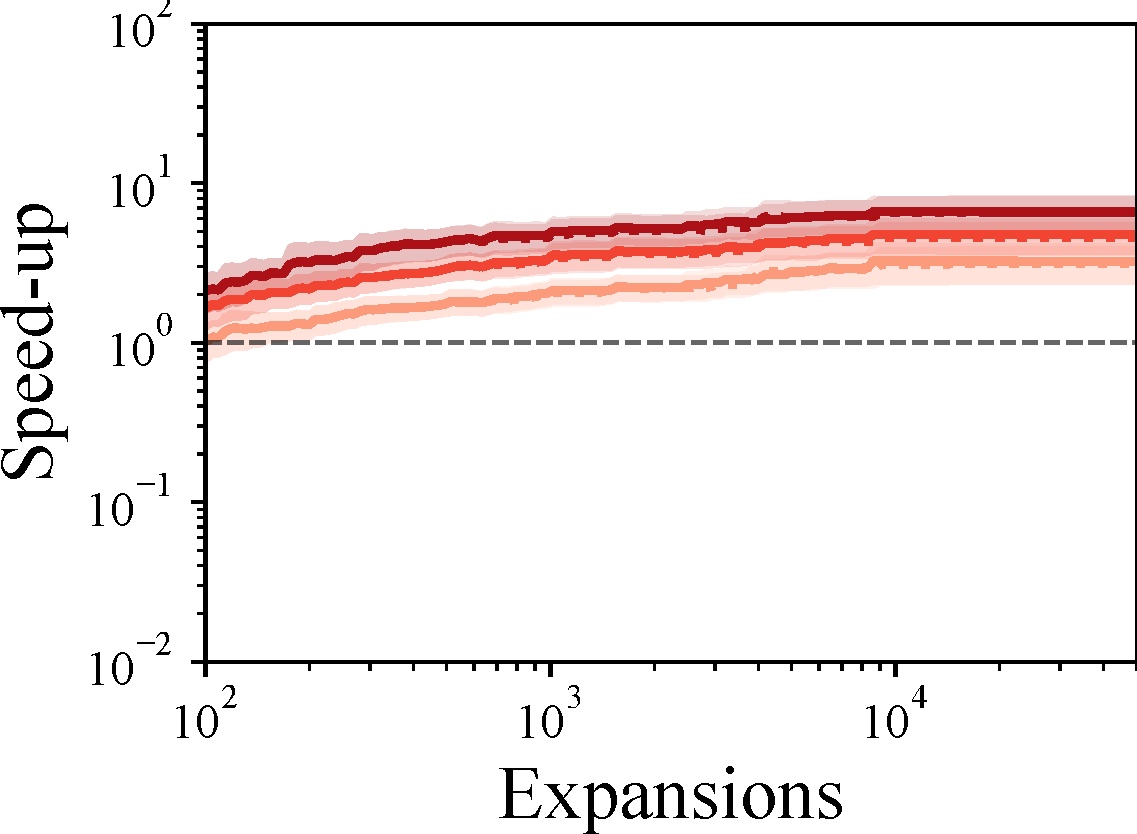}
  \caption{$\mathbb{R}^3$ Min, First}
  \label{fig:kinematic_min_first_path}
\end{subfigure}
\begin{subfigure}{0.16\linewidth}
  \includegraphics[width=\linewidth]{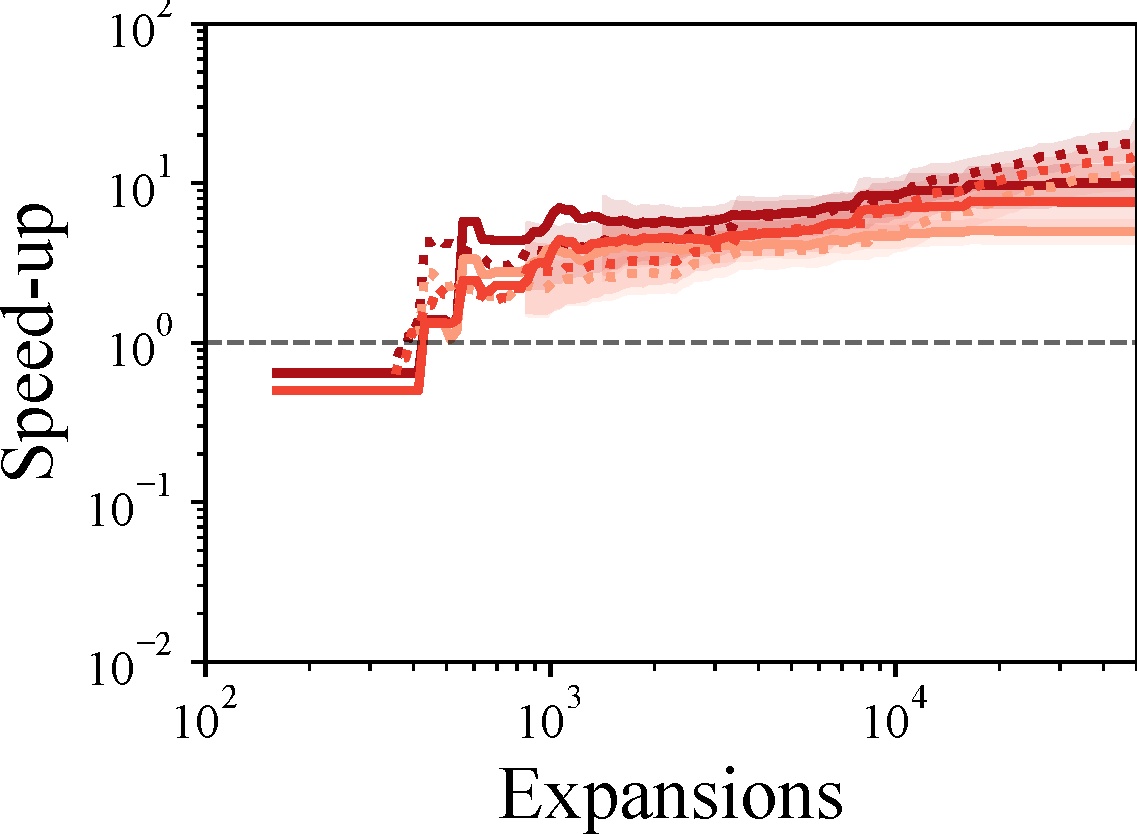}
  \caption{$\mathbb{R}^4$ Min, First}
  \label{fig:kinodynamic_min_first_path}
\end{subfigure}
\begin{subfigure}{0.16\linewidth}
  \includegraphics[width=\linewidth]{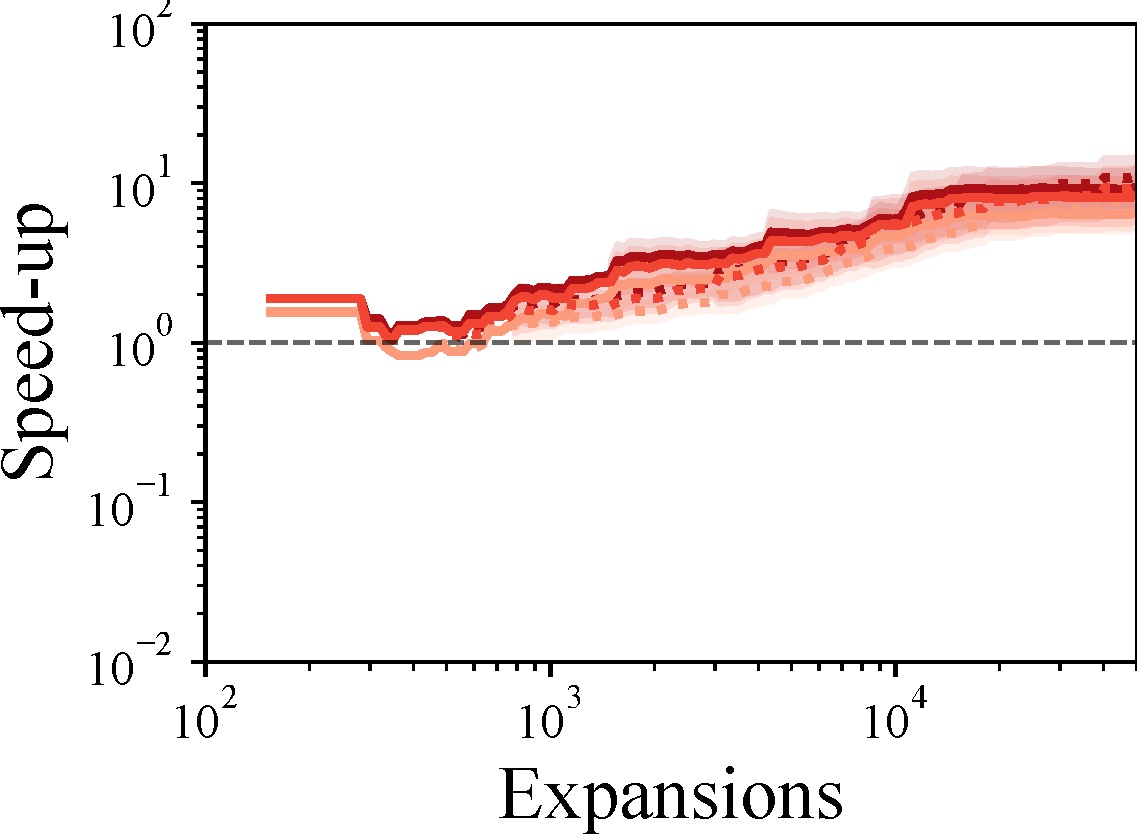}
  \caption{$\mathbb{R}^6$ Min, First}
  \label{fig:hovercraft_min_first_path}
\end{subfigure}
\begin{subfigure}{0.16\linewidth}
  \centering
  \includegraphics[width=\linewidth]{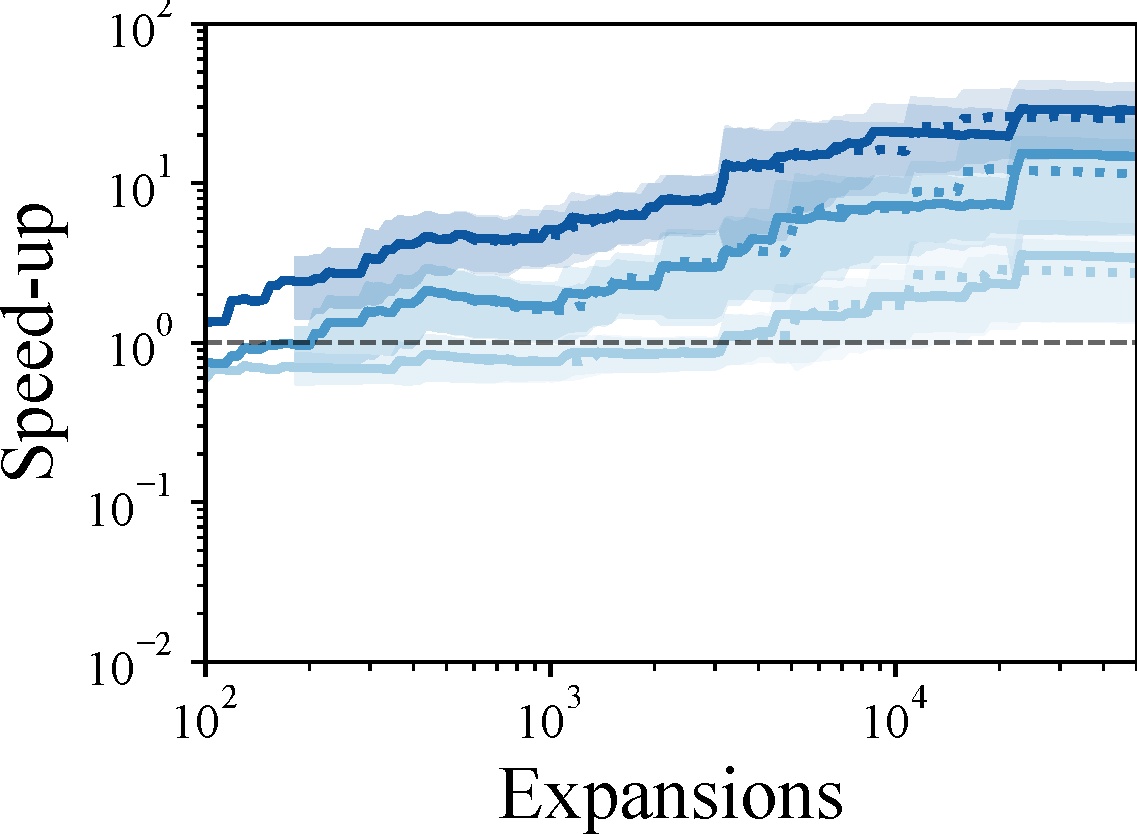}
  \caption{$\mathbb{R}^3$ Min, Best}
  \label{fig:kinematic_min_best_path}
\end{subfigure}
\begin{subfigure}{0.16\linewidth}
  \centering
  \includegraphics[width=\linewidth]{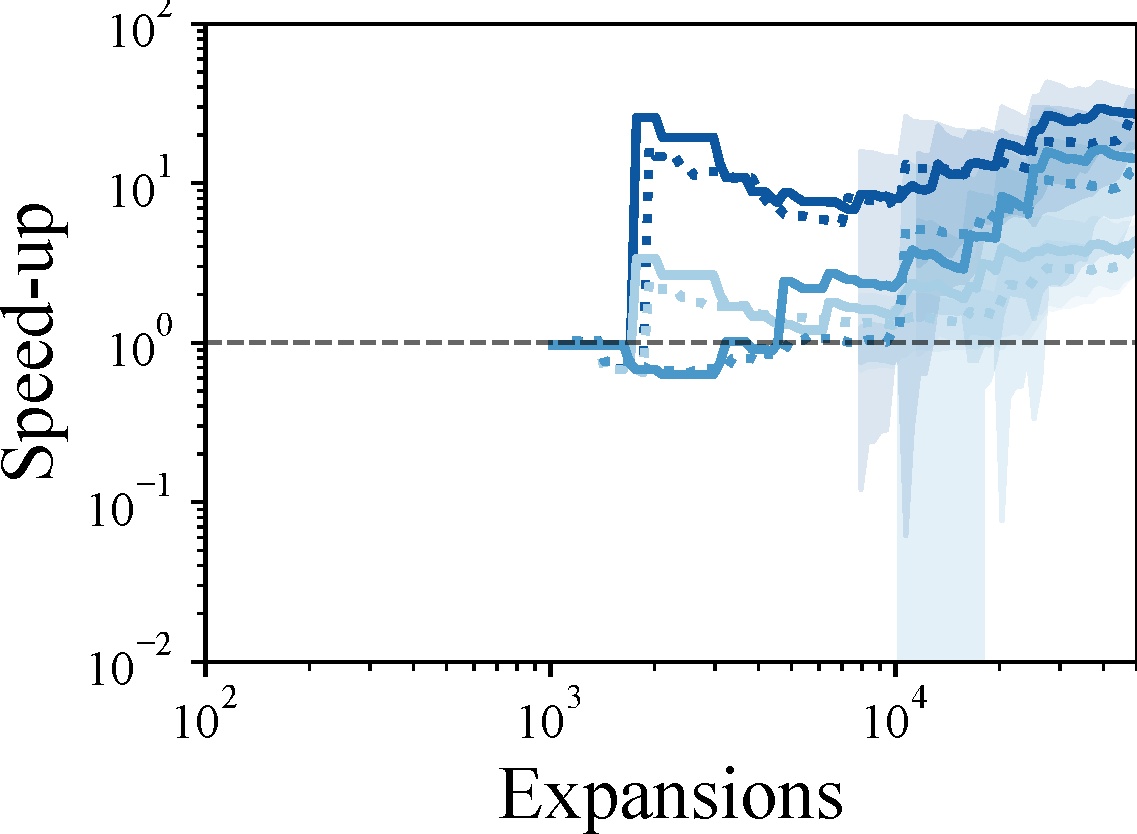}
  \caption{$\mathbb{R}^4$ Min, Best}
  \label{fig:kinodynamic_min_best_path}
\end{subfigure}
\begin{subfigure}{0.16\linewidth}
  \centering
  \includegraphics[width=\linewidth]{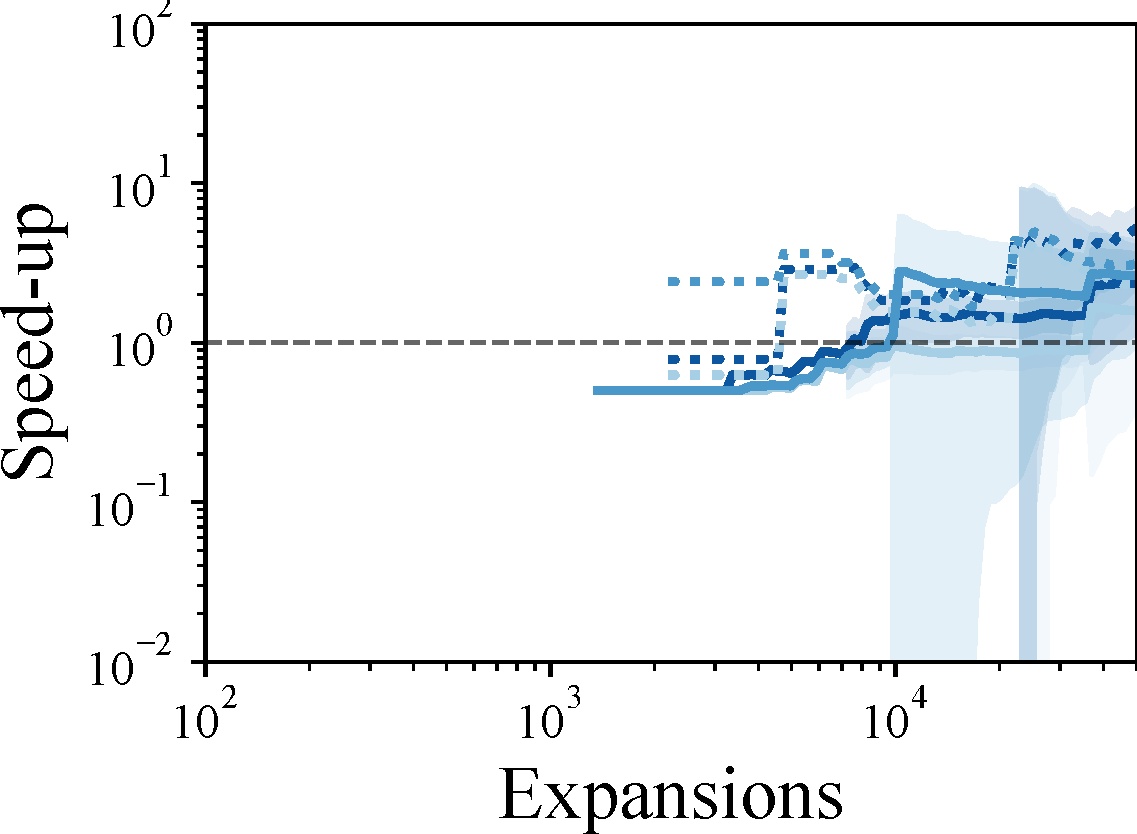}
  \caption{$\mathbb{R}^6$ Min, Best}
  \label{fig:hovercraft_min_best_path}
\end{subfigure}%
\caption{
Here, 
\legendsolid{0.675,0.063,0.086} denotes speedup to the first solution and 
\legendsolid{0.047,0.341,0.627} to best solution; lighter shades correspond to smaller LCR values (LCR, LCR/8, LCR/64). 
Solid lines compare \biighastar-$\infty$ vs.\ \ighastar-$\infty$, dotted/dashed lines compare \biighastar-$250$ vs.\ \ighastar-$250$. 
Confidence intervals are shown only for $n>30$. 
Comparisons are against Forward-only (\textbf{For.})~(\subref{fig:kinematic_forward_only_first_path}-
\subref{fig:hovercraft_forward_only_best_path})
\ighastar\ and the faster (\textbf{Min}) of forward/backward \ighastar~((\subref{fig:kinematic_min_first_path}-
\subref{fig:hovercraft_min_best_path}).
We observe roughly order-of-magnitude speedups; gains reduce as LCR is reduced, when comparing to the Min., and increased complexity of higher dimensions shifts the curves right.
}
\label{fig:R3R4R6_speedup}
\vspace{-13pt}
\end{figure*}
\begin{figure*}
  \centering
  \includegraphics[width=\linewidth]{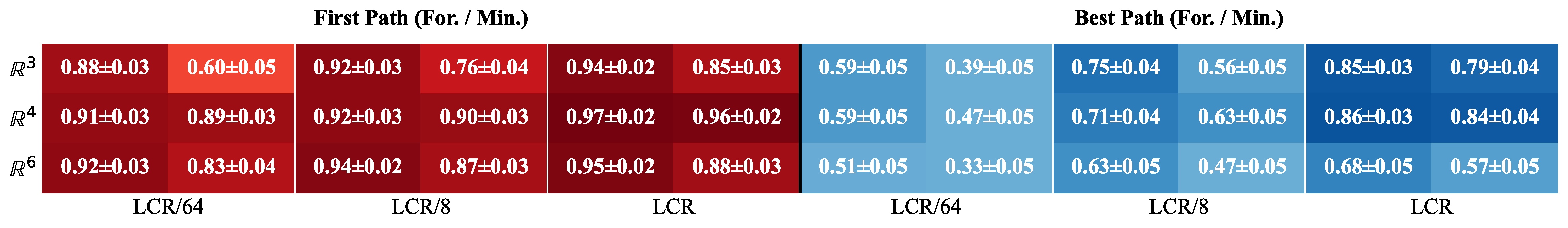}
\caption{$\hat{p}(\text{Speedup} > 1)$ across LCR values are reported as mean $\pm$ 95\% CI, shades of red represent the values for the first path, and shades of blue represent the best path.
For each LCR, the left half represents the speedup probability compared to forward (\textbf{For.}) \ighastar, and the right half represents it against the faster of forward/backward \ighastar (\textbf{Min}).
As LCR increases, $\hat{p}(\text{Speedup} > 1)$ increases across the board.
}
\label{tab:speedup_prob_lcr}
\vspace{-20pt}
\end{figure*}
\begin{figure}
    \centering
    \begin{subfigure}{\linewidth}
      \centering
      \includegraphics[width=\linewidth]{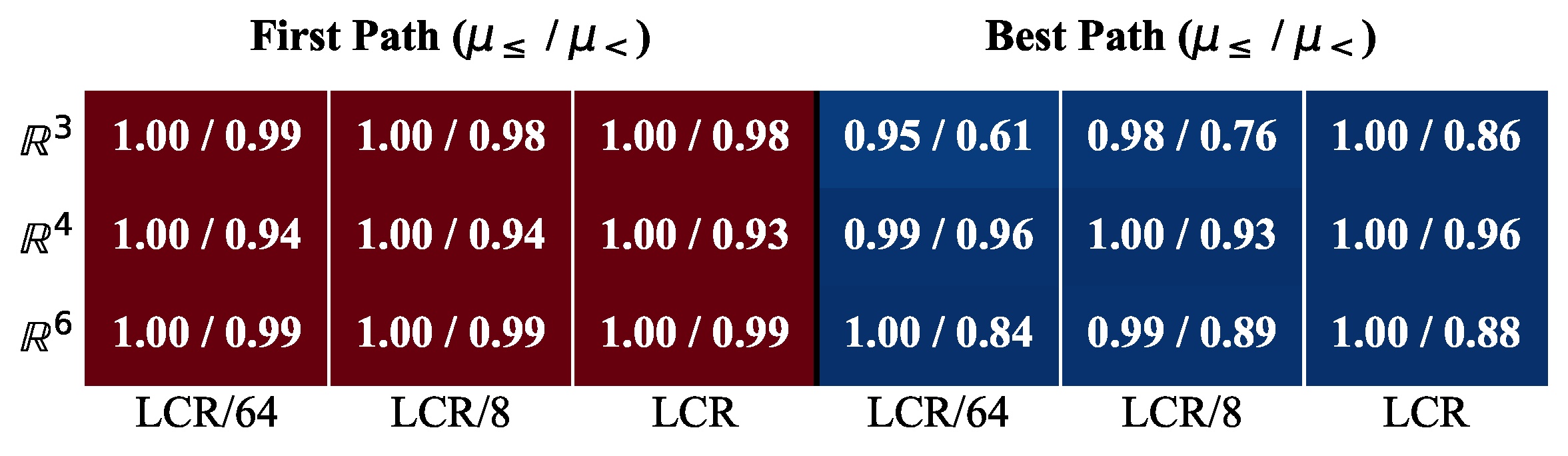}
      \caption{$\mathbb{R}^N$ $\mu_\leq,~\mu_<$}
      \label{fig:level_prob_plot}
    \end{subfigure}%
    \caption{
    Here, ~\subref{fig:level_prob_plot} shows $\mu_\leq,~\mu_<$ for $\mathbb{R}^N$ for the Min. of forward/reverse \ighastar; the likelihood that one of the \biighastar's resolution levels was lower than \ighastar's for the first or the best solution is almost always true.
    In the $\mathbb{R}^3$ setting, $\mu_<$ is lower because often the solution is available at a low enough resolution to both \biighastar and \ighastar.
    }
    \label{fig:level_probability_comparison}
\end{figure}

\begin{figure*}
    \centering
    \begin{subfigure}{0.16\linewidth}
      \centering
      \includegraphics[width=\linewidth]{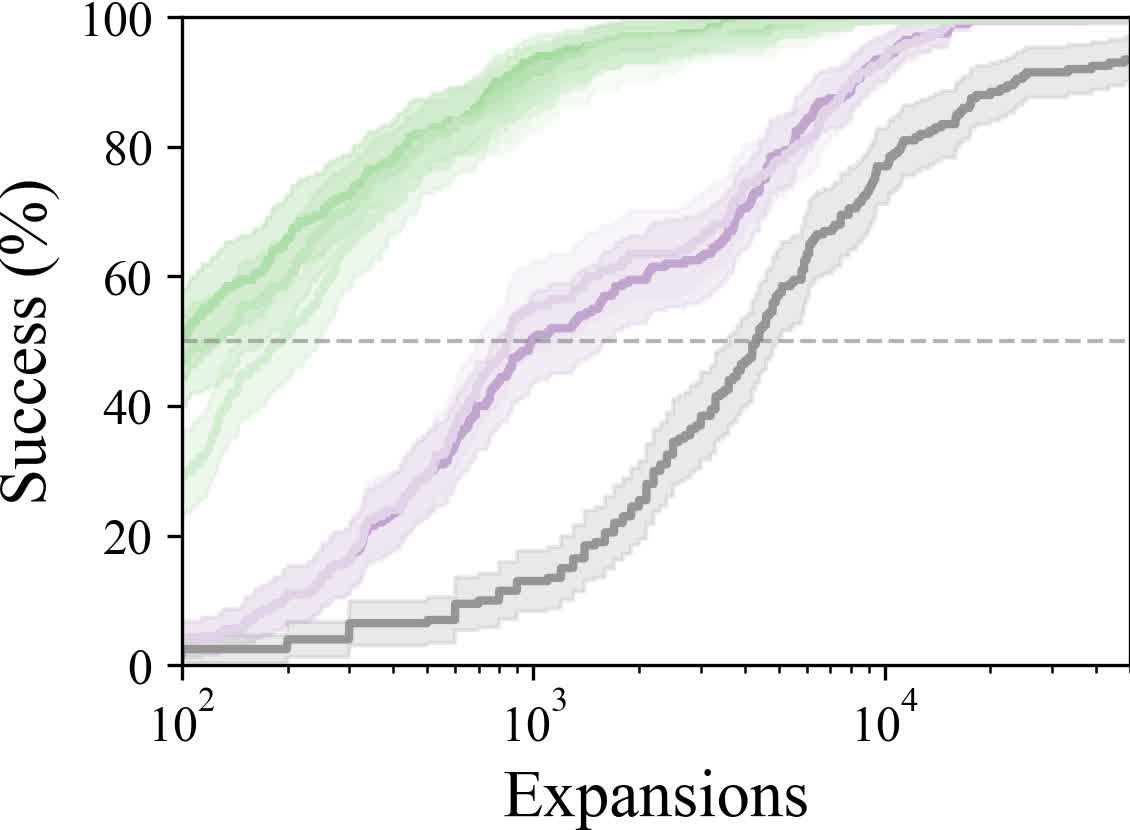}
      \caption{$\mathbb{R}^3$ Success}
      \label{fig:kinematic_success_plot}
    \end{subfigure}
    \begin{subfigure}{0.16\linewidth}
      \centering
      \includegraphics[width=\linewidth]{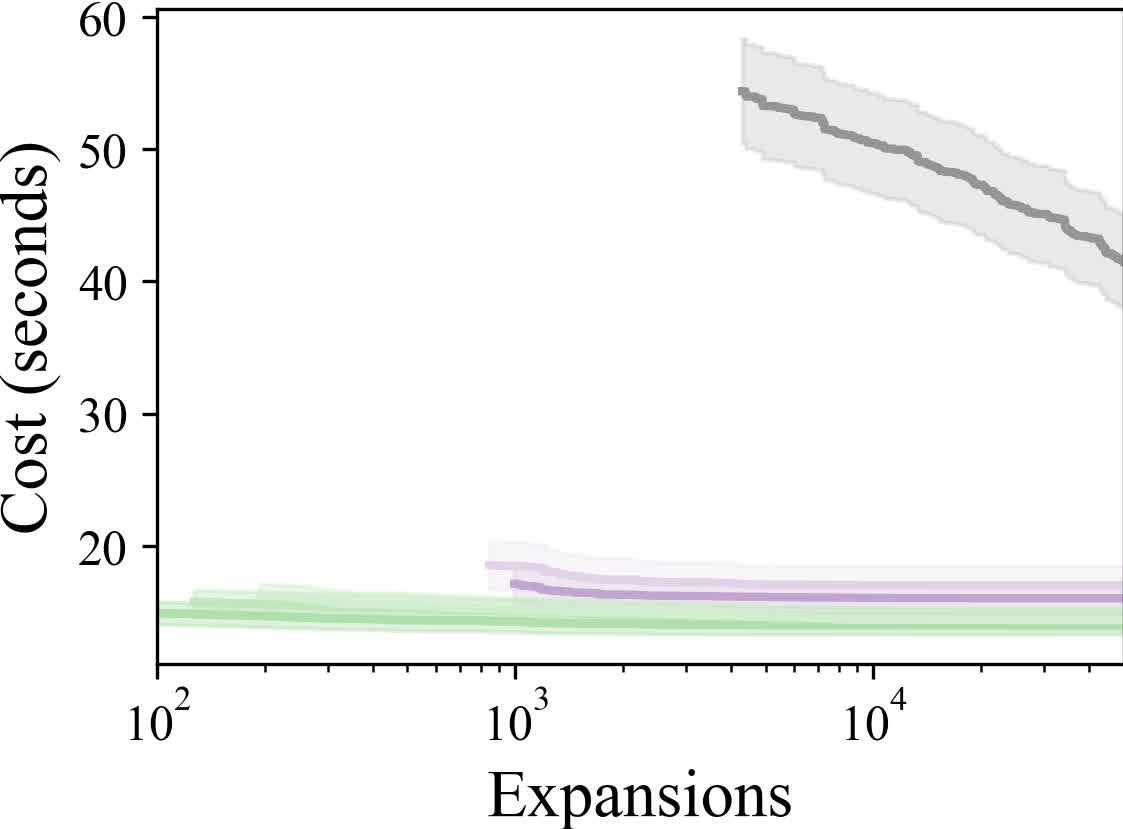}
      \caption{$\mathbb{R}^3$ Cost}
      \label{fig:kinematic_cost_plot}
    \end{subfigure}
    \begin{subfigure}{0.16\linewidth}
      \centering
      \includegraphics[width=\linewidth]{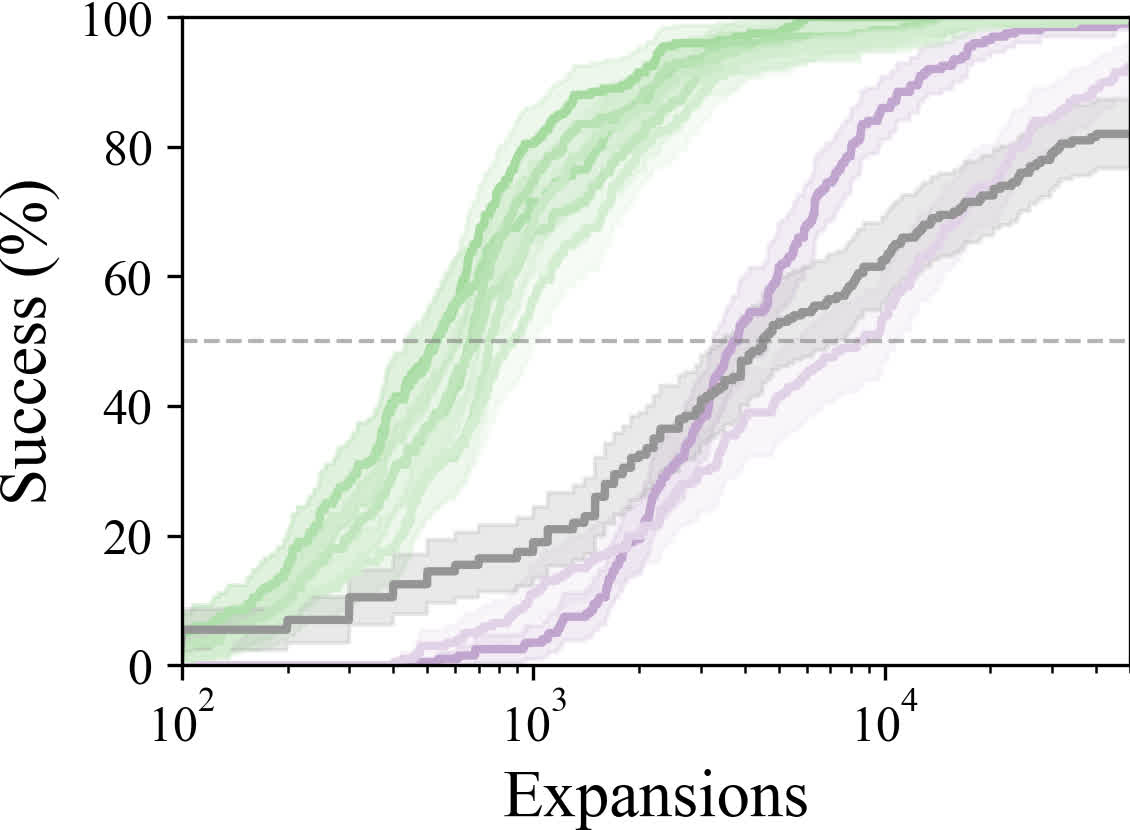}
      \caption{$\mathbb{R}^4$ Success}
      \label{fig:kinodynamic_success_plot}
    \end{subfigure}
    \begin{subfigure}{0.16\linewidth}
      \centering
      \includegraphics[width=\linewidth]{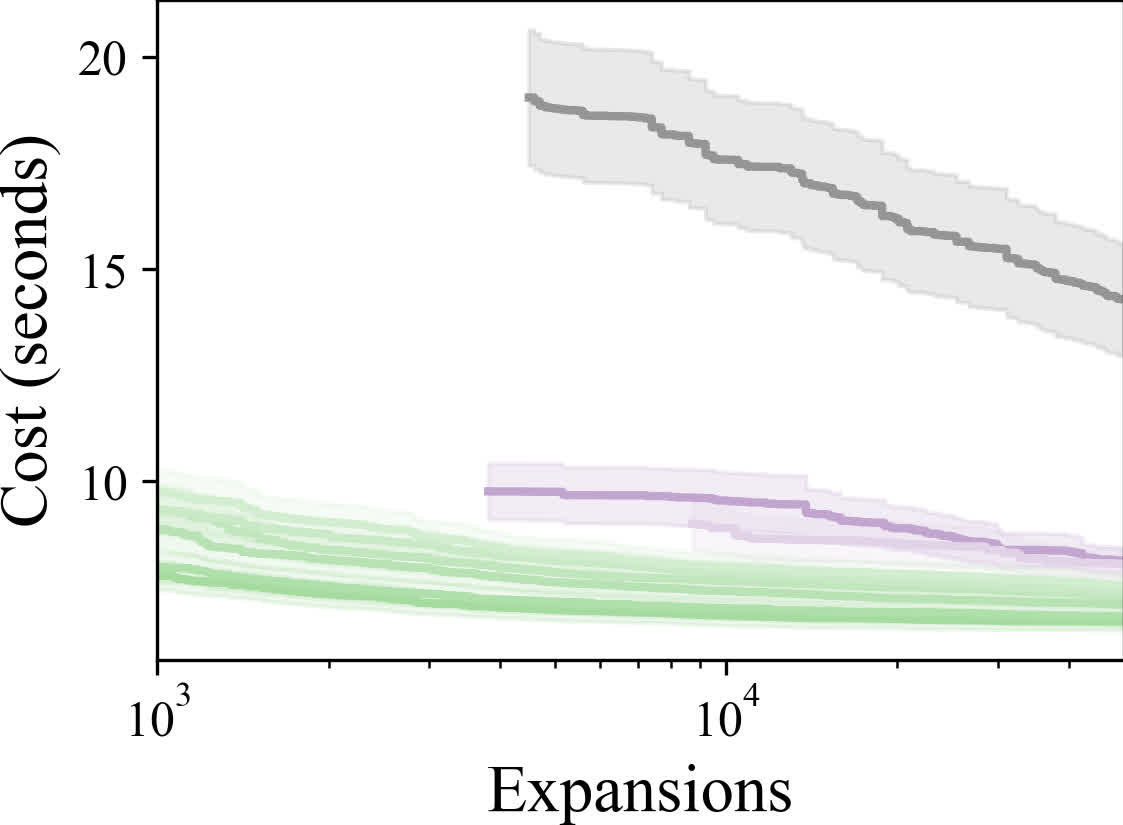}
      \caption{$\mathbb{R}^4$ Cost}
      \label{fig:kinodynamic_cost_plot}
    \end{subfigure}
    \begin{subfigure}{0.16\linewidth}
      \centering
      \includegraphics[width=\linewidth]{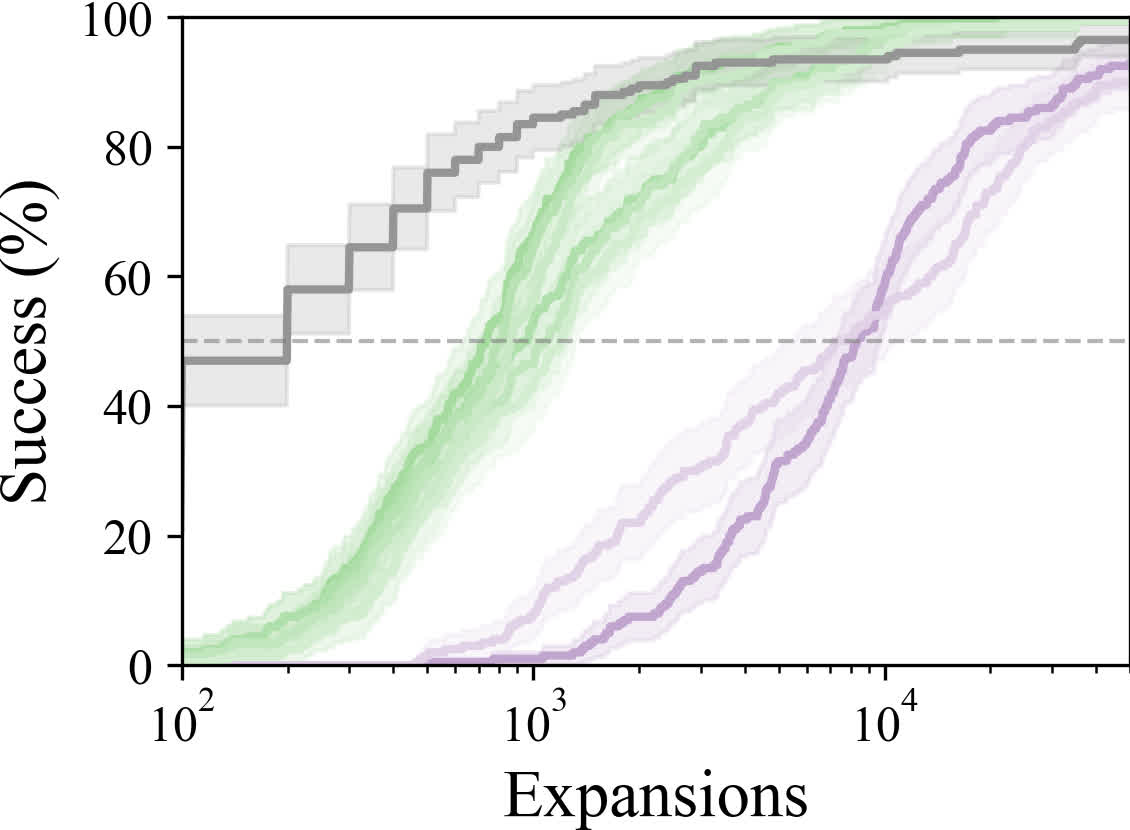}
      \caption{$\mathbb{R}^6$ Success}
      \label{fig:hovercraft_success_plot}
    \end{subfigure}
    \begin{subfigure}{0.16\linewidth}
      \centering
      \includegraphics[width=\linewidth]{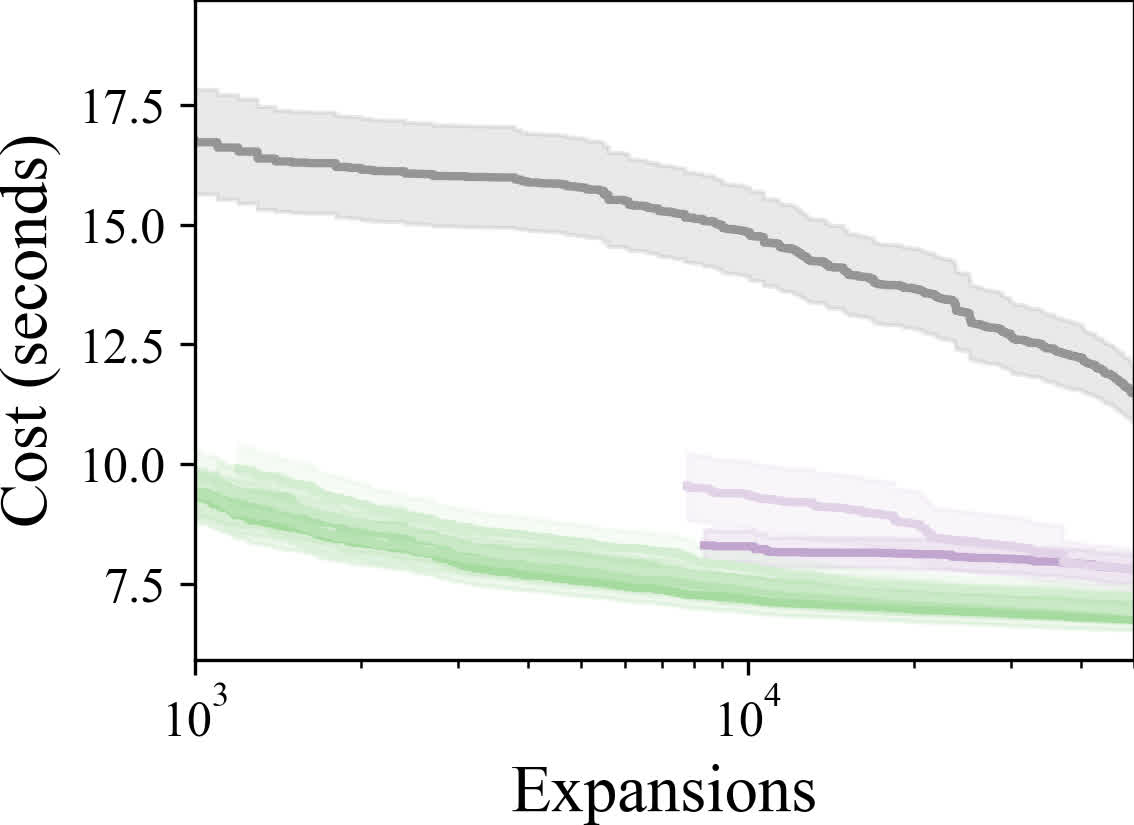}
      \caption{$\mathbb{R}^6$ Cost}
      \label{fig:hovercraft_cost_plot}
    \end{subfigure}%
    \caption{
    Here,
    \legendsolid{0.651,0.859,0.627}: Bi-\ighastar, 
    \legendsolid{0.761,0.647,0.812}: \ighastar, where darker shades imply bigger LCR,
    and additionally
    \legendsolid{0.5,0.5,0.5}: SST (from OMPL~\citep{ompl}) as a reference point. 
    Mean cost is computed for problems succeeding in the first 50~\%.
    In Fig.~\ref{tab:speedup_prob_lcr},~\ref{fig:R3R4R6_speedup}, results are \textit{somewhat} pessimistic because we measure the expansions at which we get the same or better cost -- often the cost found by \biighastar is higher but very close ($<\epsilon$, recall~\ref{positive_edge_assumption}), to \ighastar's -- this plot shows that \biighastar's cost is consistently a left-shifted version of \ighastar's by $\approx$ an order of magnitude.
    We show \sst as an additional reference -- \sst is good at rapidly exploring high-dimensional spaces; however, in mission-critical settings such as ours, solution quality dominates the in-the-loop performance~(Fig.~\ref{fig:playback_success}-\ref{fig:in_the_loop_result} and~\citep{talia2025incremental}).
    }
    \label{fig:Success_rate_cost_comparison_open_loop}
    \vspace{-10pt}
\end{figure*}

\begin{figure*}[!htb]
\centering
\begin{subfigure}{0.24\linewidth}
  \includegraphics[width=\linewidth]{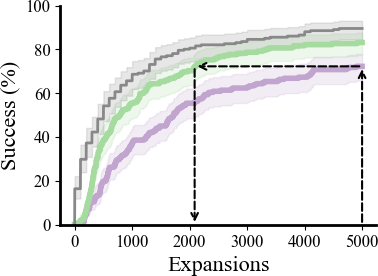}
  \caption{Playback success rate}
  \label{fig:playback_success}
\end{subfigure}%
\begin{subfigure}{0.24\linewidth}
  \includegraphics[width=\linewidth]{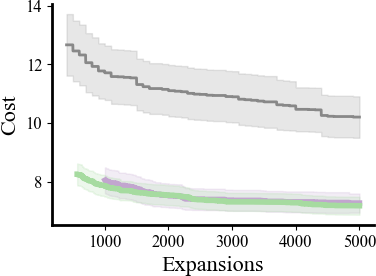}
  \caption{Playback cost}
  \label{fig:playback_cost}
\end{subfigure}%
\begin{subfigure}{0.24\linewidth}
  \includegraphics[width=\linewidth]{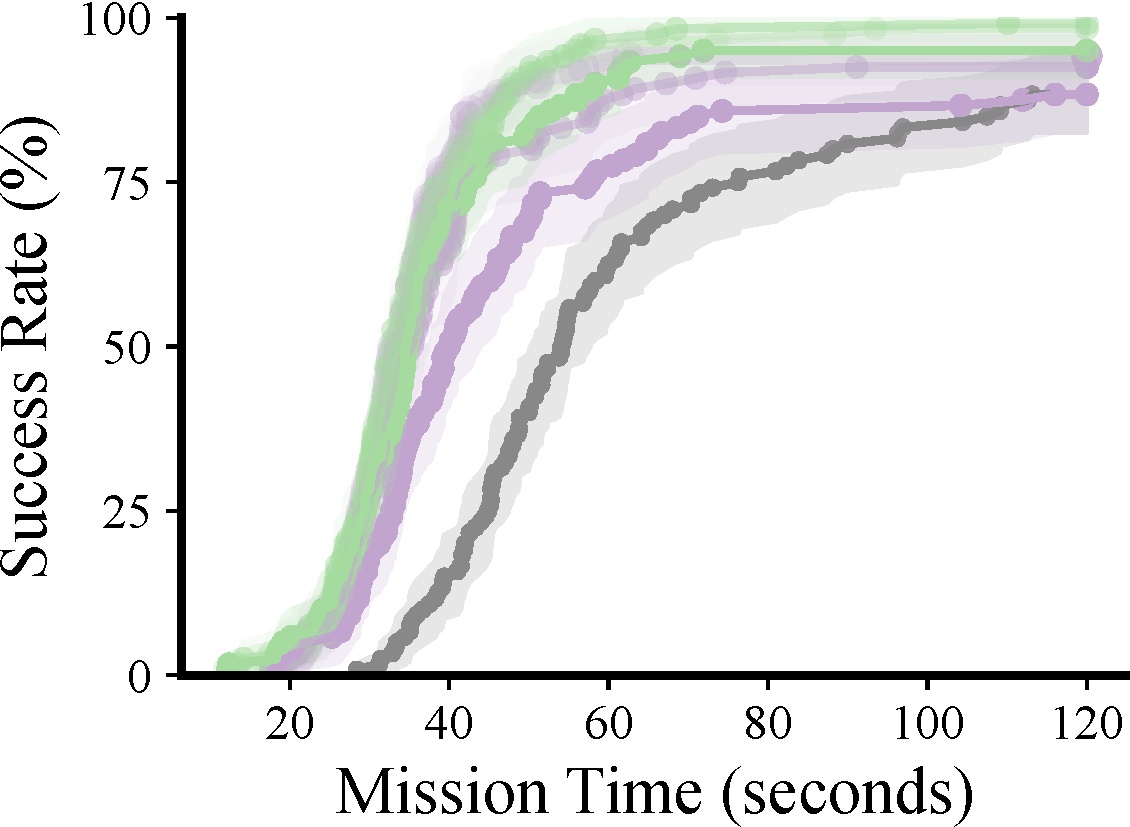}
  \caption{In-the-loop performance}
  \label{fig:in_the_loop_result}
\end{subfigure}%
\begin{subfigure}{0.28\linewidth}
  \includegraphics[width=\linewidth]{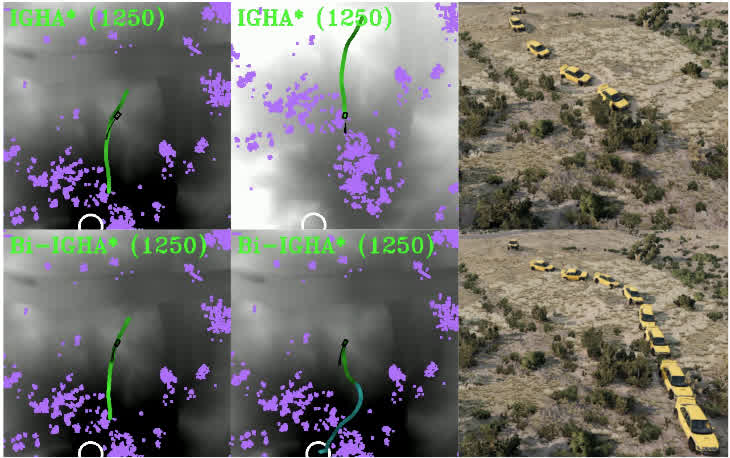}
  \caption{Starvation}
  \label{fig:qualitative_starving}
\end{subfigure}%
\caption{
Here,
\legendsolid{0.651,0.859,0.627}: Bi-\ighastar,
\legendsolid{0.761,0.647,0.812}: \ighastar.
The playback success ~\subref{fig:playback_success}, and cost ~\subref{fig:playback_cost} metrics were used to estimate a reasonable range of expansions to use in-the-loop to exaggerate the difference between \biighastar and \ighastar.
The mission-success rate plot shows \biighastar and \ighastar
running with 5k, 2.5k, and 1.25k expansions, respectively, with darker shades implying \textbf{fewer} expansions.
We include \legendsolid{0.5,0.5,0.5}:~SST's performance data from ~\citep{talia2025incremental} as reference~(5k exp).
Even with just 1.25k expansions, \biighastar is able to maintain mission performance on par with \ighastar using 5k expansions; while \ighastar gets stalled due to starvation
\biighastar's continues to provide high-quality solutions~(\subref{fig:qualitative_starving}).
}
\label{fig:closed_loop_exp}
\vspace{-20pt}
\end{figure*}

\section{Experiments}\label{sec: experiments}

We evaluate \biighastar in both open and closed-loop settings,
consider \ighastar as our baseline, and use 95$\%$ confidence intervals.
We count the expansions of the forward and reverse search separately.
For physical dimensions, we use S.I units.

\subsection{Open Loop Evaluations}\label{subsec:open_loop_evals}
We present both an ablation of \biighastar with respect to LCR and a comparison against \ighastar across three systems.
First, a kinematic car~\citep{kinematic_flat} in $\mathbb{R}^3$ ($(x,y,\theta)$) on urban city maps from the Moving AI benchmark~\citep{moving_AI_benchmark} (e.g., Fig.~\ref{fig:main}),
with $R_{LCR}=(1.0, 1.0, \pi/2)$.
Second, a kinodynamic car~\citep{non_planar_analysis} in $\mathbb{R}^4$ ($(x,y,\theta,v)$), assuming no sideslip or body-frame vertical velocity, evaluated on off-road maps taken from BeamNG~\citep{beamng_tech}~(e.g.~Fig.~\ref{fig:qualitative_starving}),
with $R_{LCR}=(1.0, 1.0, \pi/2, 2.5)$.
Third, a hovercraft~\citep{lavalle2001randomized} in $\mathbb{R}^6$ ($(x,y,\theta,v_x,v_y,\dot{\theta})$), with maps from Moving AI~\citep{moving_AI_benchmark} and BeamNG~\citep{beamng_tech}, with
$R_{LCR}=(1.0, 1.0, \pi/2, 2.5, 2.5, \pi/2)$.
In all environments, the vehicle dimensions (length, width) are $2.6\times1.6~\text{m}^2$.
Each environment contains 200 problems, with a 50k expansion limit. 
We evaluate two extreme hysteresis settings ($\infty, 250$) for \ighastar~\citep{talia2025incremental} and use the same values for \biighastar (both directions).
To compare across $\mathbb{R}^3/\mathbb{R}^4/\mathbb{R}^6$, we report LCR in terms of induced volume: LCR (base resolution), LCR/8 ($1/8^\text{th}$ volume), and LCR/64 ($1/64^\text{th}$ volume).
Note that our edges are usually longer than 1.0 meter($\approx5$m).
Here, for simplicity, \nearmeet declares $(u,v)$ feasible if $\|u-v\|<R_{LCR}$.

Let $n_{first}(\text{ALG})$ be the number of expansions required by an algorithm (either IGHA* or Bi-IGHA*) to obtain a solution.
Here, \textbf{Speedup}~$:=n_{first}(\ighastar)/n_{first}(\biighastar)$.
Let $n_{best}(\ighastar)$ be the number of expansions required by \ighastar to reach its best path cost within the expansion budget, and let $n_{best}(\biighastar)$ be the expansions required by \biighastar to reach an equivalent-or-better cost.
Here, the Speedup $:=n_{best}(\ighastar)/n_{best}(\biighastar)$.
We additionally record the solution level ($\text{level}_{\biighastar}, \text{level}_{\ighastar}$) for both first and best solutions.
We are interested in knowing how often bidirectional mitigation happens when there is a Speedup.
Bidirectional mitigation manifests empirically as follows:
when a solution is found, either \forward or \backward search operates at a level~($\text{level}_{\biighastar}$) lower than \ighastar's~($\text{level}_{\ighastar}$).
We define $\mu_\leq:=\hat{p}(\text{level}_{\biighastar}\leq \text{level}_{\ighastar}\mid \text{Speedup}>1)$ and $\mu_<:=\hat{p}(\text{level}_{\biighastar}< \text{level}_{\ighastar}\mid \text{Speedup}>1)$.
We make the above comparisons against the best of forward and reverse \ighastar, as if an oracle informed \ighastar on whether to run forwards or backwards.
Since the first solution admits any cost, we compare \biighastar against the faster of forward/backward to the first (any) solution.
For the best solution, we compare the expansions taken by \biighastar with those of the forward/reverse \ighastar that achieves the lowest cost.
We do this comparison to show that \biighastar is not fast just by hedging a bet in both directions.
In this section, we test the following hypotheses:
\begin{enumerate}[label=\textbf{H\arabic*}]
    \item A variant of \biighastar achieves Speedup$>1$.\label{hyp:speedup_meas}
    \item $\hat{p}(\text{Speedup}>1)>0.5$.\label{hyp:speedup_prob}
    \item $\mu_\leq>0.5$.\label{hyp:level_prob_lax}
    \item $\mu_< >0.5$.\label{hyp:level_prob_strict}
\end{enumerate}

\textbf{Result:}
Fig.~\ref{fig:R3R4R6_speedup} shows that generally Speedup$>1$ for the first and best path, often approaching an order of magnitude, confirming~\ref{hyp:speedup_meas}.
Fig.~\ref{tab:speedup_prob_lcr} shows that~\ref{hyp:speedup_prob} is always true for the first path, and often true for the best path.
For both of the above plots, increasing LCR increases speed up, and the speed up (or its likelihood) dips when compared to the best of forward/reverse \ighastar.
Fig.~\ref{fig:level_probability_comparison} shows that~\ref{hyp:level_prob_lax},~\ref{hyp:level_prob_strict} are always true, and provides context on the stricter condition of $\mu_<$.
Fig.~\ref{fig:Success_rate_cost_comparison_open_loop} shows a more conventional comparison and provides additional nuance to show that the results in Fig.~\ref{tab:speedup_prob_lcr},~\ref{fig:R3R4R6_speedup} are pessimistic.

\subsection{Closed Loop Evaluations}\label{subsec: closed_loop_exp}
We evaluate the planners using the $\mathbb{R}^4$ kinodynamic car model in off-road environments. 
Planning runs in closed loop with MPPI~\citep{williams2017model} (1.5\,s horizon, 1024 rollouts) in BeamNG~\citep{beamng_tech}, assuming ground-truth state and body-centric bird’s-eye-view cost and elevation maps ($100{\times}100\,\text{m}^2$, 25\,Hz).
We use the same benchmarking setup as used by the authors in ~\citep{talia2025incremental} (30 scenarios, 4 trials) and reuse its reported results for \ighastar, along with SST~\citep{SST} as a point of reference.
Here, similar to ~\citep{ortiz2024idbrrt}, \nearmeet for $(u,v)$ additionally collision checks interpolated states between $u$ and $v$ to avoid near-meets over thin obstacles.
The vehicle dimensions are $2.6\times1.6\text{m}^2$, and
the $R_{LCR}$ for $(x,y,\theta,v)$ is $(1.0,1.0,\pi/8,1.25)\approx\text{LCR}/8$.
We use the same hysteresis value of 100 as used by \ighastar in ~\citep{talia2025incremental}, and the rate of expansions ($\approx$20k/sec) on our computer matches what was reported in~\citep{talia2025incremental}.
To bring out the difference between \ighastar and \biighastar, we operate both at 5k, 2.5k, and 1.25k expansions -- we arrive at this range of expansions by looking at the success/cost plots on playback data~(Fig.~\ref{fig:playback_success},~\ref{fig:playback_cost}).
When running the planner with fewer than 5k expansions, we slow down the rate of expansions to simulate a slower computer, such that the overall compute time remains identical.
The hypothesis we want to test here is:
\begin{enumerate}[label=\textbf{H\arabic*}]
\setcounter{enumi}{4}
    \item \biighastar achieves higher success rates than \ighastar\ for a given expansion budget.
    \label{hyp:closed_loop_succ}
\end{enumerate}

\textbf{Result:}
Fig.~\ref{fig:in_the_loop_result} shows that while \ighastar's mission speed (slope of the curves) is generally close to \biighastar's, \biighastar consistently attains higher success rates under equal compute budgets, reaching comparable reliability in less time, confirming~\ref{hyp:closed_loop_succ}.
Fig.~\ref{fig:qualitative_starving} uses an anecdote to show how \ighastar's starvation and stalling can put the vehicle into a precarious situation, while \biighastar keeps the vehicle moving with its rapid planning.
\section{Discussion}
\biighastar presents a straightforward way to extend the \ighastar framework into the bidirectional setting.
As it builds on top of the \ighastar framework, as long as the assumptions of \ighastar are met, \biighastar retains its guarantees of monotonic cost improvement and termination.
For the sake of simplicity, our work here keeps the \forward and \backward search identical in both rate of expansions as well as the \shift and \activate methods used.
However, the guarantees of \biighastar don't rely on this.

Understandably, the actual utility of the planner depends on the reliability of the \nearmeet implementation.
In our work, we treat the \nearmeet method as a black box, as it is not the focus of our work.
However, we show that useful speed-ups are achievable with a range of LCR values, and show that the chosen values are reasonable by using the middle value (LCR/8) in our closed loop experiment, with a fairly simple collision-checked \nearmeet.
\section*{Acknowledgements}
\footnotesize{
This work was (partially) funded by the National Science Foundation 
NRI (\#2132848) and CHS (\#2007011), DARPA RACER (\#HR0011-21-C-0171), the Office of Naval Research (\#N00014-17-1-2617-P00004 and \#2022-016-01 UW), Amazon, Technion Autonomous Systems Program (TASP), and the United States-Israel Binational Science Foundation (BSF) grant (\#2021643).
}
\normalsize

\footnotesize
{\bibliographystyle{IEEEtran}
\bibliography{refs}
}
\end{document}